\newcommand{\argmax}{\mathrm{argmax}}
\newcommand{\cross}{\mathrm{cross}}
\newcommand{\gege}{{S(\vect{w},\vect{w}_*)}}
\newcommand{\gele}{{S(\vect{w},-\vect{w}_*)}}
\newcommand{\lele}{{S(-\vect{w},-\vect{w}_*)}}
\newcommand{\lege}{{S(-\vect{w},\vect{w}_*)}}
\newcommand{\mat}[1]{\mathbf{#1}}
\newcommand{\vect}[1]{\mathbf{#1}}
\newcommand{\norm}[1]{\left\|#1\right\|}
\newcommand{\abs}[1]{\left|#1\right|}
\newcommand{\expect}{\mathbb{E}}
\newcommand{\prob}{\mathbb{P}}
\newcommand{\diff}{\text{d}}
\newcommand{\inputdist}{\mathcal{Z}}
\newcommand{\indict}{\mathbb{I}}
\newtheorem{thm}{Theorem}[section]
\newtheorem{lem}{Lemma}[section]
\newtheorem{cor}{Corollary}[section]
\newtheorem{asmp}{Assumption}[section]
\newtheorem{defn}{Definition}[section]
\title{When is a Convolutional Filter Easy to Learn?}
\author{Simon S. Du\\
Carnegie Mellon University\\
\texttt{ssdu@cs.cmu.edu}
\And Jason D. Lee\\
University of Southern California\\
\texttt{jasonlee@marshall.usc.edu}
\And Yuandong Tian\\
Facebook AI Research\\
\texttt{yuandong@fb.com}
}
\begin{document}

\maketitle

\begin{abstract}
	\label{sec:abs}
	We analyze the convergence of (stochastic) gradient descent algorithm for learning a convolutional filter with Rectified Linear Unit (ReLU) activation function.
Our analysis does not rely on any specific form of the input distribution and our proofs only use the definition of ReLU, in contrast with previous works that are restricted to standard Gaussian input.
We show that (stochastic) gradient descent with random initialization can learn the convolutional filter in polynomial time and the convergence rate depends on the smoothness of the input distribution and the closeness of patches.
To the best of our knowledge, this is the first recovery guarantee of gradient-based algorithms for convolutional filter on non-Gaussian input distributions.
Our theory also justifies the two-stage learning rate strategy in deep neural networks.
While our focus is theoretical, we also present experiments that justify our theoretical findings.

\end{abstract}

\section{Introduction}
\label{sec:intro}
\def\vw{\mathbf{w}}
\def\vZ{\mathbf{Z}}
\def\ind#1{\mathbb{I}\{#1\}}

Deep convolutional neural networks (CNN) have achieved the state-of-the-art performance in many applications such as computer vision~\citep{krizhevsky2012imagenet}, natural language processing~\citep{dauphin2016language} and reinforcement learning applied in classic games like Go~\citep{silver2016mastering}.
Despite the highly non-convex nature of the objective function, simple first-order algorithms like stochastic gradient descent and its variants often train such networks successfully.
On the other hand, the success of convolutional neural network remains elusive from an optimization perspective.

When the input distribution is not constrained, existing results are mostly negative, such as hardness of learning a 3-node neural network~\citep{blum1989training} or a non-overlap convolutional filter~\citep{brutzkus2017globally}.
Recently, \citet{shamir2016distribution} showed learning a simple one-layer fully connected neural network is hard for some specific input distributions.

These negative results suggest that, in order to explain the empirical success of SGD for learning neural networks, stronger assumptions on the input distribution are needed. Recently, a line of research~\citep{tian2017analytical,brutzkus2017globally,li2017convergence,soltanolkotabi2017learning,zhong2017recovery} assumed the input distribution be \emph{standard Gaussian} $N(0,\mat{I})$ and showed (stochastic) gradient descent is able to recover neural networks with ReLU activation in polynomial time.

One major issue of these analysis is that they rely on specialized analytic properties of the Gaussian distribution (c.f. Section~\ref{sec:rel}) and thus \emph{cannot} be generalized to the non-Gaussian case, in which real-world distributions fall into. For general input distributions, new techniques are needed.

In this paper we consider a simple architecture: a convolution layer, followed by a ReLU activation function, and then average pooling.
Formally, we let $\vect{x} \in \mathbb{R}^{d}$ be an input sample, e.g., an image, we generate $k$ patches from $\vect{x}$, each with size $p$: $\mat Z \in \mathbb{R}^{p \times k}$ where the $i$-th column is the $i$-th patch generated by some known function $\vect{Z}_i = \vect{Z}_i(\vect{x})$.
For a filter with size $2$ and stride $1$, $\vect{Z}_i(\vect{x})$ is the $i$-th and $(i+1)$-th pixels.
Since for convolutional filters, we only need to focus on the patches instead of the input, in the following definitions and theorems, we will refer $\mat{Z}$ as input and let $\inputdist$ as the distribution of $\mat{Z}$: ($\sigma(x) = \max(x, 0)$ is the ReLU activation function)

\begin{align}
    f(\vect{w},\mat{Z}) =  \frac{1}{k}\sum_{i=1}^{k} \sigma\left(\vect{w}^\top \vect{Z}_i\right). \label{eqn:prediction_function}
\end{align}

See Figure~\ref{fig:architecture} (a) for a graphical illustration. Such architectures have been used as the first layer of many works in computer vision~\citep{lin2013network,milletari2016v}.
\begin{figure*}
	\centering
    \includegraphics[width=\textwidth]{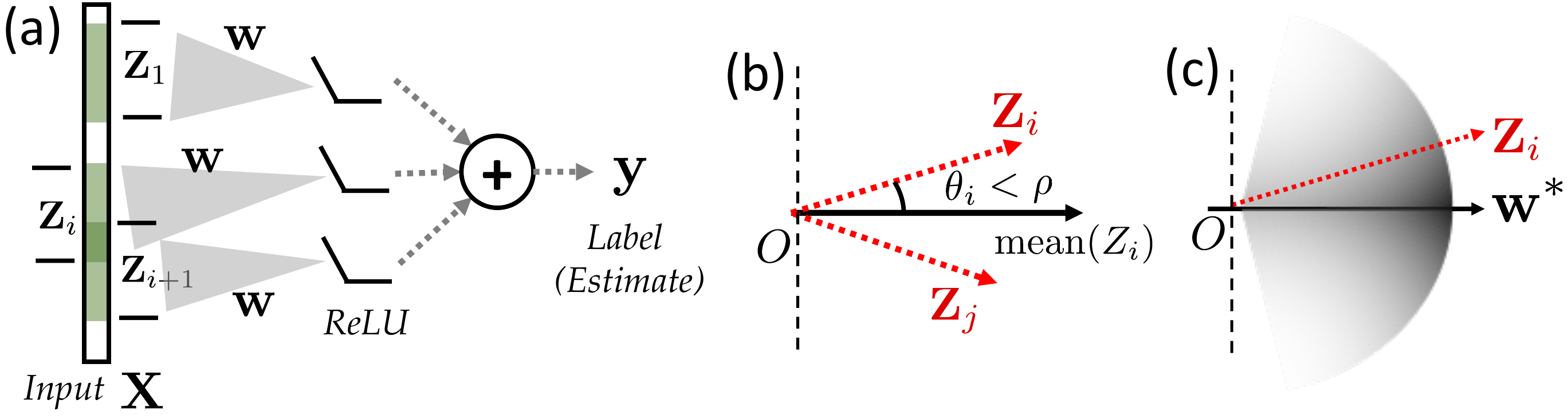}
    \caption{\textbf{(a)} Architecture of the network we are considering. Given input $X$, we extract its patches $\{Z_i\}$ and send them to a shared weight vector $\vw$. The outputs are then sent to ReLU and then summed to yield the final label (and its estimation). \textbf{(b)-(c)} Two conditions we proposed for convergence. We want the data to be (b) highly correlated and (c) concentrated more on the direction aligned with the ground truth vector $\vw^*$.}\label{fig:architecture}
\end{figure*}
We address the realizable case, where training data are generated from~\eqref{eqn:prediction_function} with some unknown teacher parameter $\vect{w}_*$ under input distribution $\inputdist$. Consider the $\ell_2$ loss $\ell\left(\vect{w},\mat{Z}\right) = \frac{1}{2} \left(f(\vect{w},\mat{Z})-f(\vect{w}_*,\mat{Z})\right)^2$. We learn by (stochastic) gradient descent, i.e.,
\begin{align}
    \vect{w}_{t+1} & = \vect{w}_{t} - \eta_t g(\vect{w}_t	) \label{eqn:sgd}
\end{align} 
where $\eta_t$ is the step size which may change over time and $g(\vect{w}_t)$ is a random function where its expectation equals to the population gradient
$
\expect\left[g(\vect{w})\right]= \expect_{\vect{Z}\sim \inputdist}\left[\nabla\ell\left(\vect{w},\mat{Z}\right) \right].
$
The goal of our analysis is to understand the conditions where $\vect{w} \rightarrow \vect{w}_*$, if $\vect{w}$ is optimized under (stochastic) gradient descent.

In this setup, our main contributions are as follows:
\begin{itemize}
\item \textbf{Learnability of Filters:} We show if the input patches are highly correlated (Section~\ref{sec:multi_patches}), i.e., $\theta\left(\vect{Z}_i, \vect{Z}_j\right) \le \rho$ for some small $\rho >0$, then gradient descent and stochastic gradient descent with random initialization recovers the filter in polynomial time.\footnote{Note since in this paper we focus on continuous distribution over $\mat{Z}$, our results do not conflict with previous negative results\citep{blum1989training,brutzkus2017globally} whose constructions  rely on discrete distributions.}
Furthermore, strong correlations imply faster convergence.
To the best of our knowledge, this is the first recovery guarantee of randomly initialized gradient-based algorithms for learning filters (even for the simplest one-layer one-neuron network) on non-Gaussian input distribution, answering an open problem in~\citep{tian2017analytical}.

\item \textbf{Distribution-Aware Convergence Rate}. We formally establish the connection between the smoothness of the input distribution and the convergence rate for filter weights recovery where the smoothness in our paper is defined as the ratio between the largest and the least eigenvalues of the second moment of the activation region (Section~\ref{sec:one_patch}).
We show that a smoother input distribution leads to faster convergence, and Gaussian distribution is a special case that leads to the tightest bound. 
This theoretical finding also justifies the two-stage learning rate strategy proposed by~\citep{he2016deep,szegedy2017inception} if the step size is allowed to change over time.
\end{itemize}

\subsection{Related Works}
\label{sec:rel}
In recent years, theorists have tried to explain the success of deep learning from different perspectives. 
From optimization point of view, optimizing neural network is a non-convex optimization problem. 
Pioneered by~\citet{ge2015escaping}, a class of non-convex optimization problems that satisfy strict saddle property can be optimized by perturbed (stochastic) gradient descent in polynomial time~\citep{jin2017escape}.\footnote{Gradient descent is not guaranteed to converge to a local minima in polynomial time~\citep{du2017gradient,pmlr-v49-lee16}.} 
This motivates the research of studying the landscape of neural networks~\citep{soltanolkotabi2017theoretical,kawaguchi2016deep,choromanska2015loss,hardt2016identity,haeffele2015global,mei2016landscape,freeman2016topology,safran2016quality,zhou2017landscape,nguyen2017loss}
However, these results cannot be directly applied to analyzing the convergence of gradient-based methods for ReLU activated neural networks. 

From learning theory point of view, it is well known that training a neural network is hard in the worst cases~\citep{blum1989training,livni2014computational,vsima2002training,shalev2017failures,shalev2017weight} and recently,~\citet{shamir2016distribution} showed either  ``niceness" of the target function or of the input distribution alone is sufficient for optimization algorithms used in practice to succeed.
With some additional assumptions, many works tried to design algorithms that provably learn a neural network with polynomial time and sample complexity~\citep{goel2016reliably,zhang2016l1,zhang2015learning,sedghi2014provable,janzamin2015beating,gautier2016globally,goel2017learning}.
However, these algorithms are tailored for certain architecture and cannot explain why (stochastic) gradient based optimization algorithms work well in practice.

Focusing on gradient-based algorithms, a line of research analyzed the behavior of (stochastic) gradient descent for \emph{Gaussian} input distribution.
\citet{tian2017analytical} showed population gradient descent is able to find the true weight vector with random initialization for one-layer one-neuron model.
\citet{brutzkus2017globally} showed population gradient descent recovers the true weights of a convolution filter with non-overlapping input in polynomial time.
\citet{li2017convergence} showed SGD can recover the true weights of a one-layer ResNet model with ReLU activation under the assumption that the spectral norm of the true weights is bounded by a small constant. All the methods use explicit formulas for Gaussian input, which enable them to apply trigonometric inequalities to derive the convergence. With the same Gaussian assumption, \citet{soltanolkotabi2017learning} shows that the true weights can be exactly recovered by  projected gradient descent with enough samples in linear time, if the number of inputs is less than the dimension of the weights.  

Other approaches combine tensor approaches with assumptions of input distribution. \citet{zhong2017recovery} proved that with sufficiently good initialization, which can be implemented by tensor method, gradient descent can find the true weights of a 3-layer fully connected neural network. However, their approach works with known input distributions. \citet{soltanolkotabi2017learning} used Gaussian width (c.f. Definition 2.2 of~\citep{soltanolkotabi2017learning}) for concentrations and his approach cannot be directly extended to learning a convolutional filter.

In this paper, we adopt a different approach that only relies on the definition of ReLU. 
We show as long as the input distribution satisfies weak smoothness assumptions, we are able to find the true weights by SGD in polynomial time. Using our conclusions, we can justify the effectiveness of large amounts of data (which may eliminate saddle points), two-stage and adaptive learning rates used by~\citet{he2016deep,szegedy2017inception}, etc. 

\subsection{Organization}
\label{sec:org}
This paper is organized as follows. 
In Section~\ref{sec:one_patch}, we analyze the simplest one-layer one-neuron model where we state our key observation and establish the connection between smoothness and convergence rate. 
In Section~\ref{sec:multi_patches}, we discuss the performance of (stochastic) gradient descent for learning a convolutional filter.
We provide empirical illustrations in
Section~\ref{sec:exp} and conclude in Section~\ref{sec:con}. 
We place most of our detailed proofs in the Appendix.

\subsection{Notations}
\label{sec:pre}
Let $\norm{\cdot}_2$ denote the Euclidean norm of a finite-dimensional vector.
For a matrix $\mat{A}$, we use $\lambda_{\max}\left(\mat{A}\right)$ to denote its largest singular value and $\lambda_{\min}\left(\mat{A}\right)$ its smallest singular value.
Note if $\mat{A}$ is a positive semidefinite matrix, $\lambda_{\max}\left(\mat{A}\right)$ and $\lambda_{\min}\left(\mat{A}\right)$ represent the largest and smallest eigenvalues of $\mat{A}$, respectively.
Let $O(\cdot)$ and $\Theta(\cdot)$ denote the standard Big-O and Big-Theta notations that hide absolute constants.
We assume the gradient function is uniformly bounded, i.e., There exists $B > 0$ such that $\norm{g(\vect{w})}_2 \le B$. This condition is satisfied as long as patches, $\vect{w}$ and noise are all bounded.

\section{Warm Up: Analyzing One-Layer One-Neuron Model}
\label{sec:one_patch}
Before diving into the convolutional filter, we first analyze the special case for $k=1$, which is equivalent to the one-layer one-neuron architecture.
The analysis in this simple case will give us insights for the fully general case.
For the ease of presentation, we define following two events and corresponding second moments\begin{align}
\gege= \left\{\mat{Z}: \vect{w}^\top \mat{Z} \ge 0, \vect{w}_*^\top \vect{Z} \ge 0\right\} ,\quad &\gele	= \left\{\mat{Z}: \vect{w}^\top \mat{Z} \ge 0, \vect{w}_*^\top \vect{Z} \le 0\right\}, \label{eqn:def_A_B}\\
\mat{A}_{\vect{w},\vect{w}_*}= \expect\left[\vect{Z}\vect{Z}^\top\indict\left\{\gege\right\}\right], \quad &\mat{A}_{\vect{w},-\vect{w}_*} = \expect\left[\vect{Z}\vect{Z}^\top\indict\left\{\gele\right\}\right]. \nonumber
\end{align}
where $\mathbb{I}\left\{\cdot\right\}$ is the indicator function.
Intuitively, $\gege$ is the joint activation region of $\vect{w}$ and $\vect{w}_*$ and $\gele$ is the joint activation region of $\vect{w}$ and $-\vect{w}_*$.
See Figure~\ref{fig:regions} (a) for the graphical illustration.
With some simple algebra we can derive the population gradient.
\begin{align*}
	\expect\left[\nabla \ell\left(\vect{w},\mat{Z}\right)\right] 
	 = 
\mat{A}_{\vect{w},\vect{w}_*}\left(\vect{w}-\vect{w}_*\right) 
+ \mat{A}_{\vect{w},-\vect{w}_*}\vect{w}.
\end{align*}

One key observation is we can write the inner product $\langle\nabla_\vect{w}\ell\left(\vect{w}\right),\vect{w}-\vect{w}_*\rangle $ as the sum of two non-negative terms (c.f.~Lemma~\ref{lem:key_observation}).
This observation directly leads to the following Theorem~\ref{thm:one_patch_converge}.
\begin{thm}\label{thm:one_patch_converge}
Suppose for any $\vect{w}_1,\vect{w}_2$ with $\theta\left(\vect{w}_1, \vect{w}_2\right) < \pi$, $\expect\left[\vect{Z}\vect{Z}^\top \indict\left\{\gege\right\}\right] \succ 0$ 
and the initialization $\vect{w}_0$ satisfies $\ell\left(\vect{w}_0\right) < \ell\left(\vect{0}\right)$ then gradient descent algorithm recovers $\vect{w}_*$.
\end{thm}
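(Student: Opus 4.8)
The plan is to turn the one-line decomposition behind Lemma~\ref{lem:key_observation} into a Lyapunov argument for the distance $\|\vect{w}_t-\vect{w}_*\|_2$. Writing the population gradient as $\nabla\ell(\vect{w})=\mat{A}_{\vect{w},\vect{w}_*}(\vect{w}-\vect{w}_*)+\mat{A}_{\vect{w},-\vect{w}_*}\vect{w}$, we have
\[
\langle\nabla\ell(\vect{w}),\,\vect{w}-\vect{w}_*\rangle=(\vect{w}-\vect{w}_*)^\top\mat{A}_{\vect{w},\vect{w}_*}(\vect{w}-\vect{w}_*)+\vect{w}^\top\mat{A}_{\vect{w},-\vect{w}_*}(\vect{w}-\vect{w}_*),
\]
and Lemma~\ref{lem:key_observation} says both terms are $\ge0$ (the second because on $\gele$ one has $\vect{w}^\top\vect{Z}\ge0\ge\vect{w}_*^\top\vect{Z}$, so $\vect{w}^\top\vect{Z}\vect{Z}^\top(\vect{w}-\vect{w}_*)\ge0$ pointwise). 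Hence along the gradient-flow limit $\dot{\vect{w}}_t=-\nabla\ell(\vect{w}_t)$ — and, with a sufficiently small step size, along the discrete iteration, using $\|g\|_2\le B$ and $\nabla\ell(\vect{w}_*)=\vect{0}$ — the potential $\tfrac12\|\vect{w}_t-\vect{w}_*\|_2^2$ is non-increasing, and so is the loss $\ell(\vect{w}_t)$. Thus the trajectory stays in the compact set $\Omega:=\{\vect{w}:\|\vect{w}-\vect{w}_*\|_2\le\|\vect{w}_0-\vect{w}_*\|_2,\ \ell(\vect{w})\le\ell(\vect{w}_0)\}$.

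Next I would rule out the only configurations at which the hypothesis gives no information: $\theta(\vect{w},\vect{w}_*)=\pi$ (the ray $\{-c\vect{w}_*:c\ge0\}$) and $\vect{w}=\vect{0}$. A direct computation with $u:=\vect{w}_*^\top\vect{Z}$ gives, for $c\ge0$, $\ell(-c\vect{w}_*)=\ell(\vect{0})+\tfrac12 c^2\,\expect[u^2\indict\{u<0\}]\ge\ell(\vect{0})$, and $\expect[u^2\indict\{u<0\}]>0$ since the positive-definiteness hypothesis forces $\inputdist$ not to be supported in a halfspace. Because $\ell(\vect{w}_0)<\ell(\vect{0})$ and $\ell(\vect{w}_t)$ is non-increasing, $\vect{w}_t$ never lies on that ray (nor equals $\vect{0}$); moreover, by continuity of $\ell$, the sublevel set $\{\ell\le\ell(\vect{w}_0)\}$ intersected with the bounded set $\Omega$ is a fixed positive distance from the ray, so $\theta(\vect{w},\vect{w}_*)<\pi$ for every $\vect{w}\in\Omega$.

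Finally I would extract a uniform strong-monotonicity constant. For each $\epsilon>0$ let $\Omega_\epsilon:=\Omega\cap\{\|\vect{w}-\vect{w}_*\|_2\ge\epsilon\}$; this is compact and contained in $\{\theta(\cdot,\vect{w}_*)<\pi\}$, so by the hypothesis $\mat{A}_{\vect{w},\vect{w}_*}\succ 0$ on it, and since $\inputdist$ is continuous the map $\vect{w}\mapsto\lambda_{\min}(\mat{A}_{\vect{w},\vect{w}_*})$ is continuous, whence $\gamma_\epsilon:=\inf_{\vect{w}\in\Omega_\epsilon}\lambda_{\min}(\mat{A}_{\vect{w},\vect{w}_*})>0$. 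Then whenever $\|\vect{w}_t-\vect{w}_*\|_2\ge\epsilon$,
\[
\tfrac{\ud}{\ud t}\|\vect{w}_t-\vect{w}_*\|_2^2=-2\langle\nabla\ell(\vect{w}_t),\vect{w}_t-\vect{w}_*\rangle\le-2(\vect{w}_t-\vect{w}_*)^\top\mat{A}_{\vect{w}_t,\vect{w}_*}(\vect{w}_t-\vect{w}_*)\le-2\gamma_\epsilon\|\vect{w}_t-\vect{w}_*\|_2^2,
\]
so $\|\vect{w}_t-\vect{w}_*\|_2$ decays exponentially until it drops below $\epsilon$, after which monotonicity keeps it there; as $\epsilon$ is arbitrary, $\vect{w}_t\to\vect{w}_*$. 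In the discrete case the analogous one-step bound $\|\vect{w}_{t+1}-\vect{w}_*\|_2^2\le(1-2\eta_t\gamma_\epsilon)\|\vect{w}_t-\vect{w}_*\|_2^2+\eta_t^2B^2$, with $\eta_t$ small enough (or diminishing), yields the same conclusion.

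The delicate step is the passage from the pointwise hypothesis ``$\mat{A}_{\vect{w},\vect{w}_*}\succ 0$ whenever $\theta(\vect{w},\vect{w}_*)<\pi$'' to a \emph{uniform} positive lower bound $\gamma_\epsilon$ on the invariant region: one must use $\ell(\vect{w}_0)<\ell(\vect{0})$ to stay uniformly away from the bad ray $\theta=\pi$ and from $\vect{0}$, and invoke continuity/compactness for $\lambda_{\min}(\mat{A}_{\vect{w},\vect{w}_*})$, which relies on $\inputdist$ being continuous. Making this last point quantitative — which is what a convergence \emph{rate}, rather than mere convergence, would require — is exactly where the smoothness assumptions on $\inputdist$ in the later sections enter.
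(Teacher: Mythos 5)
Your argument is correct, and it takes a genuinely different (and strictly more ambitious) route than the paper's. The paper's proof is a critical-point analysis: it tacitly assumes gradient descent converges to some stationary point $\vect{w}$ with $\nabla\ell(\vect{w})=\vect{0}$, notes via Lemma~\ref{lem:key_observation} that both non-negative terms of $\langle\nabla\ell(\vect{w}),\vect{w}-\vect{w}_*\rangle$ must then vanish, and derives a contradiction unless $\vect{w}=\vect{w}_*$: the case $\theta(\vect{w},\vect{w}_*)\neq\pi$ is excluded because $\mat{A}_{\vect{w},\vect{w}_*}\succ 0$ forces $\vect{w}=\vect{w}_*$, the case $\theta(\vect{w},\vect{w}_*)=\pi$ by writing $\vect{w}=-\alpha\vect{w}_*$ and using $\mat{A}_{\vect{w},-\vect{w}_*}\succ 0$, and $\vect{w}=\vect{0}$ by monotone decrease of the loss from $\ell(\vect{w}_0)<\ell(\vect{0})$. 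You instead run a Lyapunov argument on $\norm{\vect{w}_t-\vect{w}_*}_2^2$, exclude the bad ray by the explicit identity $\ell(-c\vect{w}_*)=\ell(\vect{0})+\tfrac12 c^2\,\expect[u^2\indict\{u<0\}]\ge\ell(\vect{0})$ (with positivity of the second moment correctly extracted from the hypothesis applied to $\vect{w}_1=\vect{w}_2=-\vect{w}_*$), and then use compactness of the invariant sublevel set to get a uniform $\gamma_\epsilon>0$ and hence genuine convergence, with a rate outside every $\epsilon$-ball. What your approach buys is that it does not presuppose convergence to a critical point, which the paper's argument silently does; the price is the continuity/compactness bookkeeping (continuity of $\vect{w}\mapsto\mat{A}_{\vect{w},\vect{w}_*}$ needs the distribution to put no mass on hyperplanes, consistent with the paper's footnote restricting to continuous $\mat{Z}$) and the discrete-time caveats you flag: with a constant step the bound $\norm{\vect{w}_{t+1}-\vect{w}_*}_2^2\le(1-2\eta_t\gamma_\epsilon)\norm{\vect{w}_t-\vect{w}_*}_2^2+\eta_t^2B^2$ only reaches an $O(\eta)$ ball, so one needs diminishing steps or the observation that the population gradient vanishes continuously at $\vect{w}_*$. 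Your point that the \emph{whole trajectory}, not just the limit, must stay uniformly away from $\theta=\pi$ and from $\vect{0}$ is exactly the ingredient the paper's shorter argument never needs and never supplies, and it is what makes your version self-contained.
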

The first assumption is about the non-degeneracy of input distribution.
For $\theta\left(\vect{w}_1, \vect{w}_2\right) < \pi$, one case that the assumption fails is that the input distribution is supported on a low-dimensional space, or degenerated. 
The second assumption on the initialization is to ensure that gradient descent does not converge to $\vw=\vect{0}$, at which the gradient is undefined.
This is a general convergence theorem that holds for a wide class of input distribution and initialization points.
In particular, it includes Theorem 6 of~\citep{tian2017analytical} as a special case. 
If the input distribution is degenerate, i.e., there are holes in the input space, the gradient descent may stuck around saddle points and we believe more data are needed to facilitate the optimization procedure
This is also consistent with empirical evidence in which more data are helpful for optimization.

\begin{figure*}
	\centering
    \includegraphics[width=\textwidth]{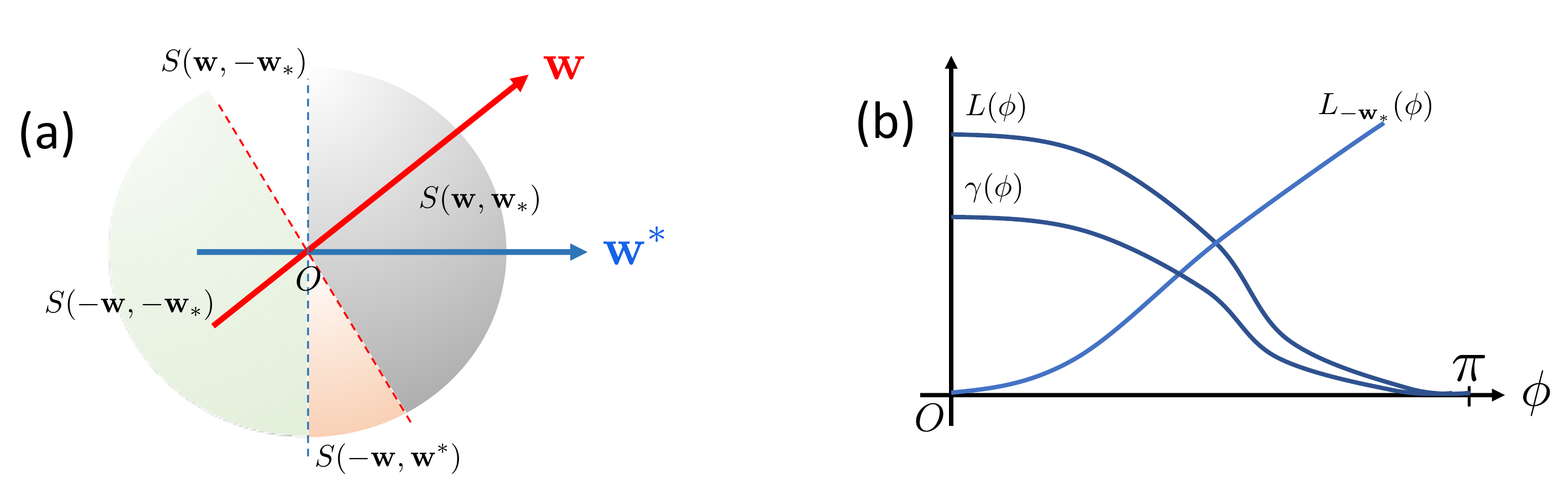}
    \caption{\textbf{(a)} The four regions considered in our analysis. \textbf{(b)} Illustration of $L\left(\phi\right),\gamma(\phi)$ and $L_{-\vect{w}_*}(\phi)$ defined in Definition~\ref{def:eigens_and_angle} and Assumption~\ref{asmp:lispchitz_growth}.
}
    \label{fig:regions}
\end{figure*}

\subsection{Convergence Rate of One-Layer One-Neuron Model}
\label{sec:one_patch_rate}
In the previous section we showed if the distribution is regular and the weights are initialized appropriately, gradient descent recovers the true weights when it converges.
In practice we also want to know how many iterations are needed.
To characterize the convergence rate, we need some quantitative assumptions.
We note that  different set of assumptions will lead to a different rate and ours is only one possible choice.  
In this paper we use the following quantities.
\begin{defn}[The Largest/Smallest eigenvalue Values of the Second Moment on Intersection of two Half Spaces]\label{def:eigens_and_angle}
	For $\phi \in \left[0,\pi \right]$,
	 define\begin{align*} 
		\gamma(\phi)  = \min_{\vect{w}: \angle \vect{w},\vect{w}_*= \phi}
		\lambda_{\min}\left(\mat{A}_{\vect{w},\vect{w}_*}\right), \quad 
	L(\phi)  = \max_{\vect{w}: \angle \vect{w},\vect{w}_*= \phi}
	\lambda_{\max}\left(\mat{A}_{\vect{w},\vect{w}_*}\right),
	\end{align*}
\end{defn}
These two conditions quantitatively characterize the angular smoothness of the input distribution.
For a given angle $\phi$, if the difference between $\gamma(\phi)$ and $L(\phi)$ is large then there is one direction has large probability mass and one direction has small probability mass, meaning the input distribution is not smooth.
On the other hand, if $\gamma(\phi)$ and $L(\phi)$ are close, then all directions have similar probability mass, which means the input distribution is smooth.
The smoothest input distributions are rotationally invariant distributions (e.g.  standard Gaussian) which have $\gamma(\phi) = L(\phi)$.
For analogy, we can think of $L(\phi)$ as Lipschitz constant of the gradient and $\gamma(\phi)$ as the strong convexity parameter in the optimization literature but here we also allow they change with the angle.
Also observe that when $\phi = \pi$, $\gamma(\phi) = L(\phi) = 0$ because the intersection has measure $0$ and both $\gamma(\phi)$ and $L(\phi)$ are monotonically decreasing.

Our next assumption is on the growth of $\mat{A}_{\vect{w},-\vect{w}_*}$.
Note that when $\theta\left(\vect{w},\vect{w}_*\right) = 0$, then $\mat{A}_{\vect{w},-\vect{w}_*} = \mat{0}$ because the intersection between $\vect{w}$ and $-\vect{w}_*$ has $0$ measure.  Also, $\mat{A}_{\vect{w},-\vect{w}_*}$ grows as the angle between $\vect{w}$ and $\vect{w}_*$ becomes larger.

In the following, we assume the operator norm of $\mat{A}_{\vect{w},-\vect{w}_*}$ increases smoothly with respect to the angle.
The intuition is that as long as input distribution bounded probability density with respect to the angle, the operator norm of  $\mat{A}_{\vect{w},-\vect{w}_*}$ is bounded.
We show in Theorem~\ref{thm:rot_invar_beta} that $\beta=1$ for rotational invariant distribution and in Theorem~\ref{thm:gaussian_beta} that $\beta = p$ for standard Gaussian distribution.

\begin{asmp}\label{asmp:lispchitz_growth}
We assume there exists $\beta > 0$ that for $0\le \phi \le \pi /2$, $L_{-w_*}(\phi)\triangleq\max_{\vect{w},\theta\left(\vect{w},\vect{w}_*\right) \le \phi} \lambda_{\max}\left(\mat{A}_{\vect{w},-\vect{w}_*}\right)\le \beta\phi$.
\end{asmp} 

Now we are ready to state the convergence rate.
\begin{thm}\label{thm:one_patch_convergence_rate}
Suppose the initialization $\vect{w}_0$ satisfies $\norm{\vect{w}_0-\vect{w}_*}_2 < \norm{\vect{w}_*}_2$.
Denote $\phi_t = \arcsin\left(\frac{\norm{\vect{w_t}-\vect{w}_*}_2}{\norm{\vect{w}_*}_2}\right)$
then if step size is set as $0 \le \eta_t \le \min_{0 \le \phi \le \phi_t} \frac{\gamma(\phi)}{2\left(L(\phi)+4\beta\right)^2}$, we have for $t=1,2,\ldots$\begin{align*}
	\norm{\vect{w}_{t+1}-\vect{w}_*}_2^2 \le \left(1-\frac{\eta_t\gamma\left(\phi_t\right)}{2}\right)\norm{\vect{w}_t - \vect{w}_*}_2^2.
\end{align*}
\end{thm}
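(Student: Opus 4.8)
The plan is the classical one-step contraction argument for the squared distance to the teacher, specialized using the structure of the ReLU population gradient. In the population regime the update reads $\vect{w}_{t+1}=\vect{w}_t-\eta_t\nabla\ell(\vect{w}_t)$, so
\[
\norm{\vect{w}_{t+1}-\vect{w}_*}_2^2=\norm{\vect{w}_t-\vect{w}_*}_2^2-2\eta_t\langle\nabla\ell(\vect{w}_t),\vect{w}_t-\vect{w}_*\rangle+\eta_t^2\norm{\nabla\ell(\vect{w}_t)}_2^2 .
\]
So I need a lower bound on the cross term of the shape $\gamma(\cdot)\norm{\vect{w}_t-\vect{w}_*}_2^2$, an upper bound on $\norm{\nabla\ell(\vect{w}_t)}_2$ of the shape $(L(\cdot)+4\beta)\norm{\vect{w}_t-\vect{w}_*}_2$, and then the step-size restriction to absorb the quadratic term into half of the cross term. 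The statement will be proved by induction on $t$ with the running invariant $\norm{\vect{w}_t-\vect{w}_*}_2<\norm{\vect{w}_*}_2$ (which is also what makes $\phi_t$, and hence the prescribed range of $\eta_t$, well defined): the base case is the hypothesis on $\vect{w}_0$, and once the contraction is established it gives $\norm{\vect{w}_{t+1}-\vect{w}_*}_2\le\norm{\vect{w}_t-\vect{w}_*}_2<\norm{\vect{w}_*}_2$, closing the induction.

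For the cross term I use the key decomposition (Lemma~\ref{lem:key_observation}). Writing $\psi_t:=\theta(\vect{w}_t,\vect{w}_*)$ and using the population-gradient formula, $\langle\nabla\ell(\vect{w}_t),\vect{w}_t-\vect{w}_*\rangle=(\vect{w}_t-\vect{w}_*)^\top\mat{A}_{\vect{w}_t,\vect{w}_*}(\vect{w}_t-\vect{w}_*)+(\vect{w}_t-\vect{w}_*)^\top\mat{A}_{\vect{w}_t,-\vect{w}_*}\vect{w}_t$. The second summand equals $\expect[((\vect{w}_t-\vect{w}_*)^\top\vect{Z})(\vect{w}_t^\top\vect{Z})\indict\{\vect{w}_t^\top\vect{Z}\ge0,\ \vect{w}_*^\top\vect{Z}\le0\}]$, and on that event $\vect{w}_t^\top\vect{Z}\ge0$ while $(\vect{w}_t-\vect{w}_*)^\top\vect{Z}\ge\vect{w}_t^\top\vect{Z}\ge0$, so the integrand is nonnegative and the summand is $\ge0$ (this is the only place the ReLU definition is used); the first summand is $\ge\lambda_{\min}(\mat{A}_{\vect{w}_t,\vect{w}_*})\norm{\vect{w}_t-\vect{w}_*}_2^2\ge\gamma(\psi_t)\norm{\vect{w}_t-\vect{w}_*}_2^2$ by Definition~\ref{def:eigens_and_angle}. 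For the gradient norm, the triangle inequality together with Definition~\ref{def:eigens_and_angle} and Assumption~\ref{asmp:lispchitz_growth} (applicable once $\psi_t\le\pi/2$ is checked) gives
\begin{align*}
\norm{\nabla\ell(\vect{w}_t)}_2 &\le\lambda_{\max}(\mat{A}_{\vect{w}_t,\vect{w}_*})\norm{\vect{w}_t-\vect{w}_*}_2+\lambda_{\max}(\mat{A}_{\vect{w}_t,-\vect{w}_*})\norm{\vect{w}_t}_2\\
&\le L(\psi_t)\norm{\vect{w}_t-\vect{w}_*}_2+\beta\psi_t\norm{\vect{w}_t}_2 .
\end{align*}

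The \emph{geometric heart} of the proof is to convert $\psi_t$ and $\psi_t\norm{\vect{w}_t}_2$ into multiples of $\norm{\vect{w}_t-\vect{w}_*}_2$. From $\norm{\vect{w}_t-\vect{w}_*}_2<\norm{\vect{w}_*}_2$, expanding the square gives $\vect{w}_t^\top\vect{w}_*>\tfrac12\norm{\vect{w}_t}_2^2\ge0$, so $\psi_t<\pi/2$; applying the law of sines to the triangle with vertices $\vect{0},\vect{w}_t,\vect{w}_*$ gives $\sin\psi_t\le\norm{\vect{w}_t-\vect{w}_*}_2/\norm{\vect{w}_*}_2=\sin\phi_t$, and since both angles lie in $[0,\pi/2)$ this yields $\psi_t\le\phi_t$. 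Combining $\psi_t\le\tfrac{\pi}{2}\sin\psi_t$ (valid on $[0,\pi/2]$) with $\norm{\vect{w}_t}_2\le\norm{\vect{w}_*}_2+\norm{\vect{w}_t-\vect{w}_*}_2<2\norm{\vect{w}_*}_2$ gives $\psi_t\norm{\vect{w}_t}_2\le\pi\norm{\vect{w}_t-\vect{w}_*}_2$, hence $\norm{\nabla\ell(\vect{w}_t)}_2\le(L(\psi_t)+\pi\beta)\norm{\vect{w}_t-\vect{w}_*}_2\le(L(\psi_t)+4\beta)\norm{\vect{w}_t-\vect{w}_*}_2$. Since $\psi_t\in[0,\phi_t]$, the hypothesis on $\eta_t$ gives $\eta_t\le\tfrac{\gamma(\psi_t)}{2(L(\psi_t)+4\beta)^2}$, so $\eta_t^2\norm{\nabla\ell(\vect{w}_t)}_2^2\le\tfrac12\eta_t\gamma(\psi_t)\norm{\vect{w}_t-\vect{w}_*}_2^2$. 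Plugging the two bounds into the expansion yields $\norm{\vect{w}_{t+1}-\vect{w}_*}_2^2\le(1-\tfrac32\eta_t\gamma(\psi_t))\norm{\vect{w}_t-\vect{w}_*}_2^2$, and monotonicity of $\gamma$ with $\psi_t\le\phi_t$ lets us replace $\gamma(\psi_t)$ by $\gamma(\phi_t)$ (with room to spare), giving the stated bound and closing the induction.

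I expect the geometric bookkeeping in the third step to be the main obstacle: the law-of-sines comparison between $\psi_t=\theta(\vect{w}_t,\vect{w}_*)$ and $\phi_t=\arcsin(\norm{\vect{w}_t-\vect{w}_*}_2/\norm{\vect{w}_*}_2)$, together with the $\sin$/angle comparison, is what makes the constants line up so that the threshold $\gamma(\phi)/(2(L(\phi)+4\beta)^2)$ comes out clean, and one must be careful that the iterate never escapes the ball $\{\vect{w}:\norm{\vect{w}-\vect{w}_*}_2<\norm{\vect{w}_*}_2\}$ — outside it $\phi_t$ is undefined and $\mat{A}_{\vect{w}_t,-\vect{w}_*}$ is no longer controlled by Assumption~\ref{asmp:lispchitz_growth} — which is exactly why the estimate has to be organized as an induction rather than a single-step bound.
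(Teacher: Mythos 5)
Your proof is correct and follows essentially the same route as the paper's: expand $\norm{\vect{w}_{t+1}-\vect{w}_*}_2^2$, lower-bound the cross term via Lemma~\ref{lem:key_observation} by dropping the non-negative $\mat{A}_{\vect{w},-\vect{w}_*}$ part, upper-bound $\norm{\nabla\ell(\vect{w}_t)}_2$ by $(L(\theta_t)+4\beta)\norm{\vect{w}_t-\vect{w}_*}_2$ using the angle comparison $\theta_t\le\phi_t$, and absorb the quadratic term with the step-size condition. You are in fact slightly more careful than the paper in spelling out the law-of-sines justification of $\theta_t\le\phi_t$, the check that $\theta_t<\pi/2$ so Assumption~\ref{asmp:lispchitz_growth} applies, and the induction keeping the iterates in the ball; your retained factor of $2$ on the cross term even yields the stronger contraction $1-\tfrac{3}{2}\eta_t\gamma$, which of course implies the stated bound.
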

Note both $\gamma(\phi)$ and $L(\phi)$ increases as $\phi$ decreases so we can choose a constant step size $\eta_t = \Theta\left(\frac{\gamma(\phi_0)}{\left(L(0)+\beta\right)^2}\right)$.
This theorem implies that we can find the $\epsilon$-close solution of $\vect{w}_*$ in $O\left(\frac{\left(L(0)+\beta\right)^2}{\gamma^2(\phi_0)}\log\left(\frac{1}{\epsilon}\right)\right)$ iterations.
It also suggests a direct relation between the smoothness of the distribution and the convergence rate.
For smooth distribution where $\gamma(\phi)$ and $L(\phi)$ are close and $\beta$ is small then $\frac{\left(L(0)+\beta\right)^2}{\gamma^2(\phi_0)}$ is relatively small and we need fewer iterations.
On the other hand, if $L(\phi)$ or $\beta$ is much larger than $\gamma(\phi)$, we will need more iterations.
We verify this intuition in Section~\ref{sec:exp}.

If we are able to choose the step sizes adaptively $\eta_t = \Theta\left(\frac{\gamma(\phi_t)}{\left(L(\phi_t)+\beta\right)^2}\right)$, like using methods proposed by~\citet{lin2014adaptive}, we may improve the computational complexity to $O\left(\max_{\phi \le \phi_0}\frac{\left(L(\phi)+\beta\right)^2}{\gamma^2\left(\phi\right)}\log\left(\frac{1}{\epsilon}\right)\right)$.
This justifies the use of two-stage learning rate strategy proposed by~\citet{he2016deep,szegedy2017inception} where at the beginning we need to choose learning to be small because $\frac{\gamma(\phi_0)}{2\left(L(\phi_0)+2\beta\right)^2}$ is small and later we can choose a large learning rate because as the angle between $\vect{w}_t$ and $\vect{w}_*$ becomes smaller, $\frac{\gamma(\phi_t)}{2\left(L(\phi_t)+2\beta\right)^2}$ becomes bigger.

The theorem requires the initialization satisfying $\norm{\vect{w}_0-\vect{w}_*}_2 < \norm{\vect{w}_*}_2$, which can be achieved by random initialization with constant success probability.
See Section~\ref{sec:init} for a detailed discussion.

\section{Main Results for Learning a Convolutional Filter}
\label{sec:multi_patches}
In this section we generalize ideas from the previous section to analyze the convolutional filter.
First, for given $\vect{w}$ and $\vect{w}_*$ we define four events that divide the input space of each patch $\vect{Z}_i$.
Each event corresponds to a different activation region induced by $\vect{w}$ and $\vect{w}_*$, similar to~\eqref{eqn:def_A_B}.
\begin{align*}
\gege_i = \left\{\vect{Z}_i: \vect{w}^\top\vect{Z}_i \ge 0,\vect{w}_*^\top \vect{Z}_i \ge 0\right\}, \quad 
&\gele_i = \left\{\vect{Z}_i: \vect{w}^\top\vect{Z}_i \ge 0,\vect{w}_*^\top \vect{Z}_i \le 0\right\},\\ 
\lele_i = \left\{\vect{Z}_i: \vect{w}^\top\vect{Z}_i \le 0,\vect{w}_*^\top \vect{Z}_i \le 0\right\}, \quad 
&\lege_i = \left\{\vect{Z}_i: \vect{w}^\top\vect{Z}_i \le 0,\vect{w}_*^\top \vect{Z}_i \ge 0\right\}. 
\end{align*}
Please check Figure~\ref{fig:regions} (a) again for illustration.
For the ease of presentation we also define the average over all patches in each region \begin{align*}\vect{Z}_{\gege} = \frac{1}{k}\sum_{i=1}^{k}&\vect{Z}_i\indict\left\{\gege_i\right\}, 
\vect{Z}_{\gele} = \frac{1}{k}\sum_{i=1}^{k}\vect{Z}_i\indict\left\{\gele_i\right\}, \\
&\vect{Z}_{\lege} = \frac{1}{k}\sum_{i=1}^{k}\vect{Z}_i\indict\left\{\lege_i\right\}.
\end{align*}
Next, we generalize the smoothness conditions analogue to Definition~\ref{def:eigens_and_angle} and Assumption~\ref{asmp:lispchitz_growth}.
Here the smoothness is defined over the average of patches.
\begin{asmp}\label{def:multi_patch_smooth}
For $\phi \in \left[0,\pi \right]$, define  \begin{align}
&\gamma(\phi)  = \min_{\vect{w}: \theta\left( \vect{w},\vect{w}_* \right) = \phi}
\lambda_{\min} \left(\expect\left[\vect{Z}_{\gege}\vect{Z}_{\gege}^\top\right] \right), \nonumber \\
&L(\phi)  = \max_{\vect{w}: \theta\left( \vect{w},\vect{w}_*\right)= \phi}
\lambda_{\max} \left(\expect\left[\vect{Z}_{\gege}\vect{Z}_{\gege}^\top\right] \right).  \label{eqn:multi_patch_gege}
\end{align} 
We assume for all $0 \le \phi \le \pi/2$, $\max_{\vect{w}: \theta\left( \vect{w},\vect{w}_*\right)= \phi}\lambda_{\max}  \left(\expect\left[\vect{Z}_{\gele}\vect{Z}_{\gele}^\top\right] \right) \le \beta \phi$ for some $\beta > 0$.
\end{asmp}
The main difference between the simple one-layer one-neuron network and the convolution filter is two patches may appear in different regions.
For a given sample, there may exists patch $\vect{Z}_i$ and $\vect{Z}_j$ such that $\vect{Z}_i \in \gege_i$ and $\vect{Z}_j \in \gele_j$ and their interaction plays an important role in the convergence of (stochastic) gradient descent.
Here we assume the second moment of this interaction, i.e., cross-covariance, also grows smoothly with respect to the angle.
\begin{asmp}\label{asmp:cross_cov_smooth}
    We assume there exists $L_{cross} > 0$ such that
\begin{align*}
	\max_{\vect{w}: \theta\left(\vect{w},\vect{w}_*\right)\le \phi}\lambda_{\max}\left(\expect\left[\vect{Z}_{\gege}\vect{Z}_{\gele}^\top\right]\right) 
	+ &\lambda_{\max}\left(\expect\left[ \vect{Z}_{\gege}\vect{Z}_{\lege}^\top\right]\right)\\ + &\lambda_{\max}\left(\expect\left[\vect{Z}_{\gele}\vect{Z}_{\lege}^\top\right]\right) \le L_{cross}\phi.
\end{align*}
\end{asmp}
First note if $\phi =0$, then $\vect{Z}_{\gele}$ and $\vect{Z}_{\lege}$ has measure $0$ and this assumption models the growth of cross-covariance.
Next note this $L_{cross}$ represents the closeness of patches.
If $\vect{Z}_i$ and $\vect{Z}_j$ are very similar, then the joint probability density of $\vect{Z}_i \in \gege_i$ and $\vect{Z}_j \in \gele_j$ is small which implies $L_{cross}$ is small.
In the extreme setting, $\vect{Z}_1=\ldots=\vect{Z}_k$, we have $L_\cross = 0$ because in this case the events  $\left\{\vect{Z}_i \in \gege_i\right\} \cap \{\vect{Z}_j \in \gele_j\}$, $\left\{\vect{Z}_i \in \gege_i\right\} \cap \{\vect{Z}_j \in \lege_j\}$ and $\left\{\vect{Z}_i \in \gele_i\right\} \cap \{\vect{Z}_j \in \lege_j\}$ all have measure $0$. 

Now we are ready to present our result on learning a convolutional filter by gradient descent.
\begin{thm}\label{thm:multi_patches_convergence_rate}
If the initialization satisfies $\norm{\vect{w}_0-\vect{w}_*}_2 < \norm{\vect{w}_*}_2$ and denote $\phi_t = \arcsin\left(\frac{\norm{\vect{w}_t-\vect{w}_*}_2}{\norm{\vect{w}_*}_2}\right)$ which satisfies $\gamma(\phi_0) > 6L_{\cross}$.
Then if we choose $\eta_t \le \min_{0\le\phi\le\phi_t}\frac{\gamma(\phi) -6L_\cross}{2(L(\phi)+10L_\cross + 4\beta)^2}$, we have for $t=1,2,\ldots$ and $\phi_t \triangleq  \arcsin\left(\frac{\norm{\vect{w}_t - \vect{w}_*}_2}{\norm{\vect{w}_*}_2}\right)$ \begin{align*}
	\norm{\vect{w}_{t+1}-\vect{w}_*}_2^2 \le \left(1-\frac{\eta(\gamma(\phi_t)-6L_\cross)}{2}\right)\norm{\vect{w}_t - \vect{w}_*}_2^2
\end{align*}
\end{thm}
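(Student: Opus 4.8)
The plan is to transport the one-step contraction argument of Theorem~\ref{thm:one_patch_convergence_rate} from a single patch to the \emph{averaged} patches $\vect{Z}_{\gege},\vect{Z}_{\gele},\vect{Z}_{\lege}$. First I rewrite the population gradient via the region decomposition: since $\sigma(\vect{w}^\top\vect{Z}_i)=\vect{w}^\top\vect{Z}_i$ exactly on $\gege_i\cup\gele_i$ and $\sigma(\vect{w}_*^\top\vect{Z}_i)=\vect{w}_*^\top\vect{Z}_i$ exactly on $\gege_i\cup\lege_i$, the prediction and its gradient collapse to $f(\vect{w},\mat{Z})=\vect{w}^\top(\vect{Z}_{\gege}+\vect{Z}_{\gele})$, $\nabla_{\vect{w}}f(\vect{w},\mat{Z})=\vect{Z}_{\gege}+\vect{Z}_{\gele}$, and $f(\vect{w}_*,\mat{Z})=\vect{w}_*^\top(\vect{Z}_{\gege}+\vect{Z}_{\lege})$. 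Writing $a=(\vect{w}-\vect{w}_*)^\top\vect{Z}_{\gege}$, $a'=(\vect{w}-\vect{w}_*)^\top\vect{Z}_{\gele}$, $b=\vect{w}^\top\vect{Z}_{\gele}$ and $c=\vect{w}_*^\top\vect{Z}_{\lege}$, the residual is $f(\vect{w},\mat{Z})-f(\vect{w}_*,\mat{Z})=a+b-c$ and $\expect[\nabla\ell(\vect{w})]=\expect[(a+b-c)(\vect{Z}_{\gege}+\vect{Z}_{\gele})]$. I then run the identity $\norm{\vect{w}_{t+1}-\vect{w}_*}_2^2=\norm{\vect{w}_t-\vect{w}_*}_2^2-2\eta_t\langle\expect[\nabla\ell(\vect{w}_t)],\vect{w}_t-\vect{w}_*\rangle+\eta_t^2\norm{\expect[\nabla\ell(\vect{w}_t)]}_2^2$ and estimate the last two terms.

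The geometric fact I need (the natural generalization of Lemma~\ref{lem:key_observation}) is that on $\gele_i$ one has $0\le\vect{w}^\top\vect{Z}_i\le(\vect{w}-\vect{w}_*)^\top\vect{Z}_i$ and on $\lege_i$ one has $0\le\vect{w}_*^\top\vect{Z}_i\le(\vect{w}_*-\vect{w})^\top\vect{Z}_i$; averaging over patches gives $a',b,c\ge0$. Using $\langle\vect{Z}_{\gege}+\vect{Z}_{\gele},\vect{w}-\vect{w}_*\rangle=a+a'$, the correlation term expands as $\langle\expect[\nabla\ell(\vect{w})],\vect{w}-\vect{w}_*\rangle=\expect[(a+b-c)(a+a')]=\expect[a^2]+\expect[a'b]+\expect[aa']+\expect[ab]-\expect[ac]-\expect[a'c]$. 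The two ``same-region'' terms are nonnegative: $\expect[a^2]=(\vect{w}-\vect{w}_*)^\top\expect[\vect{Z}_{\gege}\vect{Z}_{\gege}^\top](\vect{w}-\vect{w}_*)\ge\gamma(\theta(\vect{w},\vect{w}_*))\norm{\vect{w}-\vect{w}_*}_2^2$ by Assumption~\ref{def:multi_patch_smooth}, and $\expect[a'b]\ge0$ since $a',b\ge0$. Each of the four ``cross-region'' terms factors through one of $\expect[\vect{Z}_{\gege}\vect{Z}_{\gele}^\top]$, $\expect[\vect{Z}_{\gege}\vect{Z}_{\lege}^\top]$, $\expect[\vect{Z}_{\gele}\vect{Z}_{\lege}^\top]$, whose operator norms are $\le L_\cross\theta(\vect{w},\vect{w}_*)$ by Assumption~\ref{asmp:cross_cov_smooth}; combined with the elementary bounds $\norm{\vect{w}}_2\,\theta(\vect{w},\vect{w}_*)\le\frac{\pi}{2}\norm{\vect{w}-\vect{w}_*}_2$ and $\norm{\vect{w}_*}_2\,\theta(\vect{w},\vect{w}_*)\le\frac{\pi}{2}\norm{\vect{w}-\vect{w}_*}_2$ (valid because $\norm{\vect{w}-\vect{w}_*}_2<\norm{\vect{w}_*}_2$ forces $\theta(\vect{w},\vect{w}_*)<\pi/2$, so $\theta$ is within a factor $\frac{\pi}{2}$ of its sine while $\norm{\vect{w}_*}_2\sin\theta(\vect{w},\vect{w}_*)\le\norm{\vect{w}-\vect{w}_*}_2$, and likewise for $\norm{\vect{w}}_2$), each cross term is an $O(L_\cross)$ multiple of $\norm{\vect{w}-\vect{w}_*}_2^2$. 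Since $\theta(\vect{w}_t,\vect{w}_*)\le\phi_t$ and $\gamma$ is nonincreasing, this yields $\langle\expect[\nabla\ell(\vect{w}_t)],\vect{w}_t-\vect{w}_*\rangle\ge(\gamma(\phi_t)-6L_\cross)\norm{\vect{w}_t-\vect{w}_*}_2^2$.

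For the gradient norm I split $\expect[\nabla\ell(\vect{w})]=\expect[(a+b-c)\vect{Z}_{\gege}]+\expect[(a+b-c)\vect{Z}_{\gele}]$ into six pieces and bound them the same way: the leading piece $\expect[\vect{Z}_{\gege}\vect{Z}_{\gege}^\top](\vect{w}-\vect{w}_*)$ has norm at most $L(\theta(\vect{w},\vect{w}_*))\norm{\vect{w}-\vect{w}_*}_2$ (this is the $L(\phi)$ with $\phi\le\phi_t$ that enters the stated step size); the piece carrying an $\expect[\vect{Z}_{\gele}\vect{Z}_{\gele}^\top]$ factor is at most $O(\beta)\norm{\vect{w}-\vect{w}_*}_2$ by the $\beta\theta$ bound of Assumption~\ref{def:multi_patch_smooth} together with $\norm{\vect{w}}_2\,\theta(\vect{w},\vect{w}_*)\le\frac{\pi}{2}\norm{\vect{w}-\vect{w}_*}_2$; and the remaining cross pieces are $O(L_\cross)\norm{\vect{w}-\vect{w}_*}_2$ by Assumption~\ref{asmp:cross_cov_smooth} and the same angle-distance inequality, giving $\norm{\expect[\nabla\ell(\vect{w}_t)]}_2\le(L(\phi_t)+10L_\cross+4\beta)\norm{\vect{w}_t-\vect{w}_*}_2$. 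Substituting both estimates into the one-step identity and using $\eta_t\le\frac{\gamma(\phi_t)-6L_\cross}{2(L(\phi_t)+10L_\cross+4\beta)^2}$ to make the $\eta_t^2$ term at most $\frac{1}{2}\eta_t(\gamma(\phi_t)-6L_\cross)\norm{\vect{w}_t-\vect{w}_*}_2^2$ gives $\norm{\vect{w}_{t+1}-\vect{w}_*}_2^2\le(1-\frac{1}{2}\eta_t(\gamma(\phi_t)-6L_\cross))\norm{\vect{w}_t-\vect{w}_*}_2^2$. An induction then closes the argument: the contraction forces $\norm{\vect{w}_{t+1}-\vect{w}_*}_2<\norm{\vect{w}_t-\vect{w}_*}_2<\norm{\vect{w}_*}_2$, hence $\phi_{t+1}\le\phi_t\le\phi_0$, hence $\gamma(\phi_{t+1})\ge\gamma(\phi_0)>6L_\cross$, so the step size remains positive and feasible at every step.

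I expect the main obstacle to be extracting the \emph{right} power of $\norm{\vect{w}-\vect{w}_*}_2$ from the cross-region terms: the crude operator-norm estimate only produces bounds of the form $L_\cross\theta(\vect{w},\vect{w}_*)\norm{\vect{w}-\vect{w}_*}_2\norm{\vect{w}_*}_2$, which is $O(L_\cross\norm{\vect{w}-\vect{w}_*}_2\norm{\vect{w}_*}_2)$ rather than $O(L_\cross\norm{\vect{w}-\vect{w}_*}_2^2)$ and would destroy the contraction for small $\norm{\vect{w}-\vect{w}_*}_2$. The whole estimate hinges on trading the stray $\theta(\vect{w},\vect{w}_*)\norm{\vect{w}_*}_2$ for $\norm{\vect{w}-\vect{w}_*}_2$ via $\norm{\vect{w}_*}_2\sin\theta(\vect{w},\vect{w}_*)\le\norm{\vect{w}-\vect{w}_*}_2$ (and its counterpart with $\norm{\vect{w}}_2$), which is precisely why Assumption~\ref{asmp:cross_cov_smooth} is stated with a \emph{linear}-in-$\theta$ growth of the cross-covariances. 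A secondary but essential check is that the two same-region quadratic forms $\expect[a^2]$ and $\expect[a'b]$ are genuinely nonnegative — the ReLU-geometry content of Lemma~\ref{lem:key_observation} carried over to the patch averages — so that they may be discarded rather than fought.
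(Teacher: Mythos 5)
Your proposal is correct and follows essentially the same route as the paper's proof: decompose the residual and gradient over the four activation regions (your averaged-patch scalars $a,a',b,c$ are just a compact rewriting of the paper's double sums over patch pairs), lower-bound the correlation by keeping the $\gege$ quadratic form via $\gamma$, discard the nonnegative same-region term, control the three cross-covariances by Assumption~\ref{asmp:cross_cov_smooth} with the conversion $\phi\,\norm{\vect{w}_*}_2\lesssim\norm{\vect{w}-\vect{w}_*}_2$, and choose the step size so the squared-gradient term is absorbed. The only (welcome) addition is that you make explicit the closing induction $\phi_{t+1}\le\phi_t$ that keeps the step-size condition and $\gamma(\phi_t)>6L_{\cross}$ valid at every iteration, which the paper leaves implicit.
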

Our theorem suggests if the initialization satisfies $\gamma(\phi_0) > 6L_{\cross}$, we obtain linear convergence rate.
In Section~\ref{sec:good_dist}, we give a concrete example showing closeness of patches implies large $\gamma(\phi)$ and small $L_{\cross}$.
Similar to Theorem~\ref{thm:one_patch_convergence_rate}, if the step size is chosen so that $\eta_t = \Theta\left(\frac{\gamma(\phi_0) -6L_\cross}{\left(L_{\gege}(0)+10L_\cross+4\beta\right)^2}\right)$, in $O\left(\left(\frac{\gamma(\phi_0) -6L_\cross}{L_{\gege}(0)+10L_\cross+4\beta}\right)^2\log\left(\frac{1}{\epsilon}\right)\right)$ iterations, we can find the $\epsilon$-close solution of $\vect{w}_*$ and the proof is also similar to that of Theorem~\ref{thm:multi_patches_convergence_rate}.

In practice,we never get a true population gradient but only stochastic gradient $g(\vect{w})$ (c.f. Equation~\eqref{eqn:sgd}).
The following theorem shows SGD also recovers the underlying filter.
\begin{thm}\label{thm:multi_patch_rate_sgd}
	Let $\phi_* = \argmax_{\phi} \gamma(\phi) \ge 6L_{\cross}$.
	Denote $r_0 = \norm{\vect{w}_0-\vect{w}_*}_2$, $\phi_0 = \arcsin\left(\frac{r_0}{\norm{\vect{w}_*}_2}\right)$ and $\phi_1 = \frac{\phi_* + \phi_0}{2}$.
	For $\epsilon$ sufficiently small, if $\eta_t = \Theta \left(\frac{\epsilon^2\left(\gamma(\phi_1)-6L_\cross\right)^2\norm{\vect{w}_*}_2^2}{B^2}\right)$, then we have in $T = O\left(\frac{B^2}{\epsilon^2\left(\gamma(\phi_1)-6L_\cross\right)^2\norm{\vect{w}_*}_2^2} \log\left(\frac{\norm{\vect{w}_0-\vect{w}_*}}{\epsilon\delta \norm{\vect{w}_*}_2
	}\right)\right)$ iterations, with probability at least $1-\delta$ we have
	$\norm{\vect{w}_T - \vect{w}_*} \le \epsilon\norm{\vect{w}_*}_2.$
\end{thm}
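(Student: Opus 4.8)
The plan is to run the standard stochastic-approximation recursion on $R_t\triangleq\norm{\vect{w}_t-\vect{w}_*}_2^2$, using as its engine the ``one-point almost-strong-monotonicity'' of the population gradient that already produces the linear rate in Theorem~\ref{thm:multi_patches_convergence_rate}, and then to promote the resulting in-expectation contraction to a high-probability guarantee while making sure the random angle $\theta(\vect{w}_t,\vect{w}_*)$ never wanders into a region where the contraction coefficient ceases to be positive.

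\textbf{Core inequality and one-step recursion.} From the proof of Theorem~\ref{thm:multi_patches_convergence_rate}, combined with the unbiasedness $\expect[g(\vect{w})]=\expect_{\vect{Z}}[\nabla\ell(\vect{w},\mat{Z})]$, the decomposition of $\nabla\ell$ into the $\mat{A}_{\vect{w},\vect{w}_*},\mat{A}_{\vect{w},-\vect{w}_*}$ and cross-covariance terms, Assumptions~\ref{def:multi_patch_smooth}--\ref{asmp:cross_cov_smooth}, and the elementary bound $\theta(\vect{w}_t,\vect{w}_*)\le\arcsin(\norm{\vect{w}_t-\vect{w}_*}_2/\norm{\vect{w}_*}_2)=\phi_t$ (valid as long as $\norm{\vect{w}_t-\vect{w}_*}_2<\norm{\vect{w}_*}_2$), one obtains $\bigl\langle\expect[g(\vect{w}_t)],\vect{w}_t-\vect{w}_*\bigr\rangle\ge\tfrac12(\gamma(\phi_t)-6L_\cross)R_t$. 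Conditioning on the natural filtration $\cF_t$, expanding $R_{t+1}=\norm{\vect{w}_t-\eta_t g(\vect{w}_t)-\vect{w}_*}_2^2$, and using $\norm{g}_2\le B$ gives
\begin{align*}
\expect[R_{t+1}\mid\cF_t]\;\le\;\bigl(1-\eta_t(\gamma(\phi_t)-6L_\cross)\bigr)R_t+\eta_t^2 B^2 .
\end{align*}

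\textbf{Confining the angle and unrolling.} Since SGD, unlike population gradient descent, does not make $R_t$ monotone, the coefficient $\gamma(\phi_t)-6L_\cross$ could a priori turn non-positive. Introduce the stopping time $\tau=\min\{t:R_t>r_0^2\}$; the supermartingale structure of the recursion (the process $R_t-\sum_{s<t}\eta_s^2B^2$ is a supermartingale) together with Doob's or Azuma's inequality and the smallness of the noise term $\eta_t^2B^2$ under the stated step size $\eta_t=\Theta(\epsilon^2(\gamma(\phi_1)-6L_\cross)^2\norm{\vect{w}_*}_2^2/B^2)$ shows $\prob[\tau\le T]\le\delta/2$. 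On $\{\tau>T\}$ the iterates stay in the ball of radius $r_0$, i.e.\ $\phi_t\le\phi_0$, so (by the same monotonicity of $\gamma$ used for the one-neuron model, and since $\phi_1<\phi_0$) after a number of steps not depending on $\epsilon$ the iterate enters $\{R_t\le\sin^2(\phi_1)\norm{\vect{w}_*}_2^2\}$, on which the coefficient is at least $\gamma(\phi_1)-6L_\cross>0$. Iterating the recursion from that point,
\begin{align*}
\expect\bigl[R_T\indict\{\tau>T\}\bigr]\;\le\;\bigl(1-(\gamma(\phi_1)-6L_\cross)\eta\bigr)^{T}r_0^2+\frac{\eta B^2}{\gamma(\phi_1)-6L_\cross},
\end{align*}
and the stated choices of $\eta$ and $T$ make each term at most $\tfrac12\epsilon^2\delta\norm{\vect{w}_*}_2^2$; Markov's inequality and a union bound with $\{\tau\le T\}$ then give $\norm{\vect{w}_T-\vect{w}_*}_2\le\epsilon\norm{\vect{w}_*}_2$ with probability at least $1-\delta$. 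The hypothesis that $\epsilon$ is small enters through $\epsilon<\sin\phi_1$ (so the target ball lies in the ``fast'' region) and through requiring $\eta$ to also satisfy the deterministic step-size bound $\eta\le\min_{0\le\phi\le\phi_0}\frac{\gamma(\phi)-6L_\cross}{2(L(\phi)+10L_\cross+4\beta)^2}$ of Theorem~\ref{thm:multi_patches_convergence_rate}.

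\textbf{Main obstacle.} The crux is the confinement argument: coupling the random contraction coefficient $\gamma(\phi_t)-6L_\cross$ to the iterate and showing the trajectory cannot escape the region where it is bounded below. For the population dynamics this is free from the monotonicity of $R_t$; for SGD it demands a genuine stopping-time/martingale-concentration argument, and it is this demand that forces the step size to scale like $\epsilon^2$ and thereby produces the $1/\epsilon^2$ factor in the iteration complexity.
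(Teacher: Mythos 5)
Your overall architecture coincides with the paper's: the same one-step recursion $\expect[R_{t+1}\mid\cF_t]\le(1-\eta(\gamma(\phi_t)-6L_\cross))R_t+\eta^2B^2$, a confinement argument via a stopped supermartingale plus Azuma--Hoeffding (the paper's Lemma~\ref{lem:one_patch_not_escape}, which uses $G_t=(1-\eta\alpha)^{-t}(R_t-\eta B^2/\alpha)$ restricted to the event $\mathcal{C}_t$), and then unrolling the in-expectation contraction on that event followed by Markov's inequality and a union bound (Lemma~\ref{lem:one_patch_sgd_rate}).

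However, your confinement step has a genuine gap. You set the stopping time at the threshold $R_t>r_0^2$, i.e.\ at the \emph{starting} squared radius $R_0=r_0^2$. Azuma--Hoeffding controls $\prob[G_{t+1}-G_0\ge\Delta]$ by $\exp(-\Delta^2/2c_t^2)$, so the bound is only useful if the escape threshold exceeds the starting value by a gap $\Delta>0$ that dominates the accumulated martingale variation $c_T^2=\Theta(T\eta^2)=\Theta(\eta\log(T/\delta))$; with your choice the gap is exactly zero and the bound is vacuous (indeed the very first noisy step exceeds $r_0$ with probability near $1/2$). This is precisely why the theorem introduces $\phi_1=(\phi_0+\phi_*)/2$: since $\gamma$ is monotonically decreasing and $\gamma(\phi_0)>6L_\cross=\gamma(\phi_*)$, one has $\phi_0<\phi_1<\phi_*$, and the paper confines the iterates to the \emph{strictly larger} ball of radius $r_1=\norm{\vect{w}_*}_2\sin\phi_1>r_0$, using the constant (in $\epsilon$) gap $r_1^2-r_0^2$ in the Azuma exponent and paying for it by using the worse contraction coefficient $\gamma(\phi_1)-6L_\cross$ throughout. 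Your reading that $\phi_1<\phi_0$ and that the iterate must first ``enter'' a smaller fast region $\{R_t\le\sin^2(\phi_1)\norm{\vect{w}_*}_2^2\}$ is therefore backwards --- there is no two-phase argument; $\phi_1$ defines the outer confinement radius, not an inner target. Once the stopping time is moved out to $r_1$ and the coefficient is taken as $\gamma(\phi_1)-6L_\cross$, the rest of your argument (unrolling, Markov, union bound, and the $\eta\sim\epsilon^2$, $T\sim\eta^{-1}\log(\cdot)$ bookkeeping) goes through and matches the paper.
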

Unlike the vanilla gradient descent case, here the convergence rate depends on $\phi_1$ instead of $\phi_0$.
This is because of the randomness in SGD and we need a more robust initialization.
We choose $\phi_1$ to be the average of $\phi_0$ and $\phi_*$ for the ease of presentation.
As will be apparent in the proof we only require $\phi_0$ not very close to $\phi_*$.
The proof relies on constructing a martingale and use Azuma-Hoeffding inequality and this idea has been previously used by~\citet{ge2015escaping}.

\subsection{What distribution is easy for SGD to learn a convolutional filter?}\label{sec:good_dist}
Different from One-Layer One-Neuron model, here we also requires the Lipschitz constant for closeness $L_{\cross}$ to be relatively small and $\gamma(\phi_0)$ to be relatively large. 
A natural question is:
What input distributions satisfy this condition?

Here we give an example. 
We show if (1) patches are close to each other (2) the input distribution has small probability mass around the decision boundary then the assumption in Theorem~\ref{thm:multi_patches_convergence_rate} is satisfied.
See Figure~\ref{fig:architecture} (b)-(c) for the graphical illustrations.
\begin{thm}\label{thm:small_margin_bounded_angle}
    Denote $\vect{Z}_{avg} = \frac{1}{k}\sum_{i=1}^{k}\vect{Z}_i$.
Suppose all patches have unit norm \footnote{This is condition can be relaxed to the norm and the angle of each patch are independent and the norm of each pair is independent of others.} and for all for all $i$, $\theta\left( \vect{Z}_i, \vect{Z}_{avg}\right) \le \rho$.
Further assume there exists $L \ge 0$ such that for any $\phi \le \rho$ and for all $\vect{Z}_i$
\begin{align*}
	\prob\left[\theta\left(\vect{Z}_i, \vect{w}_*\right) \in \left[\frac{\pi}{2} - \phi, \frac{\pi}{2}+\phi\right]\right] \le \mu\phi, \quad 
		\prob\left[\theta\left(\vect{Z}_i, \vect{w}_*\right) \in -\left[\frac{\pi}{2} - \phi, -\frac{\pi}{2}+\phi\right]\right] \le \mu\phi,
\end{align*}
then we have \begin{align*}
	\gamma\left(\phi_0\right) \ge \gamma_{avg}\left(\phi_0\right)-4\left(1-\cos \rho\right) \text{ and }
	L_{\cross} \le 3\mu. 
\end{align*} where $\gamma_{avg}(\phi_0) = \sigma_{\min}\left( \expect\left[\vect{Z}\vect{Z}^\top \indict\left\{\vect{w}_0^\top \vect{Z}\ge 0, \vect{w}_*^\top \vect{Z} \ge 0\right\}\right]\right)$, analogue to Definition~\ref{def:eigens_and_angle}.
\end{thm}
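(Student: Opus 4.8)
\emph{Overview.} The two conclusions are essentially independent, so I would prove them separately, and the $L_{\cross}$ bound is the easier one. Recall from Assumption~\ref{asmp:cross_cov_smooth} that it suffices to bound, for every $\vect{w}$ with $\theta(\vect{w},\vect{w}_*)\le\phi$, each of the three quantities $\lambda_{\max}(\expect[\vect{Z}_{\gege}\vect{Z}_{\gele}^\top])$, $\lambda_{\max}(\expect[\vect{Z}_{\gege}\vect{Z}_{\lege}^\top])$, $\lambda_{\max}(\expect[\vect{Z}_{\gele}\vect{Z}_{\lege}^\top])$ by $\mu\phi$. For this I would use $\lambda_{\max}(\expect[XY^\top])\le\expect[\norm{X}_2\norm{Y}_2]$ (the integrand is a rank-one operator) together with the unit-norm hypothesis, which gives $\norm{\vect{Z}_{\gele}}_2\le\frac1k\sum_i\indict\{\gele_i\}$ and $\norm{\vect{Z}_{\lege}}_2\le\frac1k\sum_i\indict\{\lege_i\}$, both also $\le 1$; hence each of the three products of norms is pointwise dominated by $\frac1k\sum_i\indict\{\gele_i\}$ or by $\frac1k\sum_i\indict\{\lege_i\}$. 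It then remains to show $\prob[\gele_i]\le\mu\phi$ and $\prob[\lege_i]\le\mu\phi$: if $\vect{Z}_i\in\gele_i$ then $\vect{w}^\top\vect{Z}_i\ge0$ and $\vect{w}_*^\top\vect{Z}_i\le0$, so by the triangle inequality for angles $\theta(\vect{Z}_i,\vect{w}_*)\le\theta(\vect{Z}_i,\vect{w})+\theta(\vect{w},\vect{w}_*)\le\pi/2+\phi$ while $\theta(\vect{Z}_i,\vect{w}_*)\ge\pi/2$, so $\theta(\vect{Z}_i,\vect{w}_*)\in[\pi/2,\pi/2+\phi]$ and the margin hypothesis applies; the case $\vect{Z}_i\in\lege_i$ is the mirror image, landing in $[\pi/2-\phi,\pi/2]$. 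Summing the three $\mu\phi$ bounds and dividing by $\phi$ gives $L_{\cross}\le 3\mu$.

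\emph{The bound on $\gamma(\phi_0)$.} Here I would use the variational form $\lambda_{\min}(\expect[XX^\top])=\min_{\norm{\vect{v}}_2=1}\expect[(\vect{v}^\top X)^2]$, fix $\vect{w}$ with $\theta(\vect{w},\vect{w}_*)=\phi_0$ and a unit $\vect{v}$, and compare $\vect{v}^\top\vect{Z}_{\gege}$ with $\vect{v}^\top\vect{Z}_{avg}\indict\{E\}$, where $E=\{\vect{w}^\top\vect{Z}_{avg}\ge0,\ \vect{w}_*^\top\vect{Z}_{avg}\ge0\}$ is the single-vector activation event appearing in $\gamma_{avg}$. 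Writing $\vect{v}^\top\vect{Z}_{\gege}=\vect{v}^\top\vect{Z}_{avg}\indict\{E\}+\delta$ with $\delta=\frac1k\sum_i(\vect{v}^\top\vect{Z}_i)(\indict\{\gege_i\}-\indict\{E\})$, the error $\delta$ is supported on the event that some patch lies on the opposite side of $\{\vect{w}\}^\perp$ or $\{\vect{w}_*\}^\perp$ from $\vect{Z}_{avg}$, which (since each $\vect{Z}_i$ is within angle $\rho$ of $\vect{Z}_{avg}$) forces $\vect{Z}_{avg}$ into an $O(\rho)$-thin angular shell around the offending hyperplane. Using $(a+b)^2\ge a^2-2\abs{a}\abs{b}$ with $\abs{\vect{v}^\top\vect{Z}_i}\le1$, together with the elementary estimates $\norm{\vect{Z}_{avg}}_2\ge\cos\rho$ and $\norm{\vect{Z}_i-\vect{Z}_{avg}}_2^2\le 2(1-\cos\rho)$ (both from unit norm and $\theta(\vect{Z}_i,\vect{Z}_{avg})\le\rho$), I would absorb both the near-boundary mismatch and the cost of replacing $\vect{Z}_{avg}$ by its normalization into the single slack $4(1-\cos\rho)$; minimizing the remaining main term over $\vect{v}$ and $\vect{w}$ then yields $\gamma(\phi_0)\ge\gamma_{avg}(\phi_0)-4(1-\cos\rho)$.

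\emph{Main obstacle.} The $L_{\cross}$ estimate is routine once the angular containments are in place; the delicate step is the $\gamma$ bound, namely controlling the discrepancy between the $k$ per-patch activation indicators $\indict\{\gege_i\}$ and the single indicator $\indict\{E\}$ evaluated at the averaged patch. One must argue that this discrepancy is confined to the thin shell of width $O(\rho)$ around the decision boundaries and that, once one works with the quadratic form $(\vect{v}^\top\cdot)^2$ rather than the norm, its effect on the second moment — combined with the sub-unit length of $\vect{Z}_{avg}$ — degrades $\gamma_{avg}$ by only $O(1-\cos\rho)$ (the smoothness of the patch distribution near the boundary is what prevents this shell from contributing at order $\rho$); pinning the constant down to exactly $4$ is then a matter of careful bookkeeping in this step.
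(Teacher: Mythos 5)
Your bound on $L_{\cross}$ is correct and is essentially the paper's own argument: the paper likewise bounds each cross-covariance by $\expect\left[\norm{\vect{Z}_i}_2\norm{\vect{Z}_j}_2\indict\left\{\gege_i\gele_j\right\}\right]\le\prob\left[\gele_j\right]$, uses the same angular containment ($\theta(\vect{Z}_j,\vect{w}_*)\in[\pi/2,\pi/2+\phi]$ on $\gele_j$, mirrored on $\lege_j$) to invoke the margin hypothesis, and sums three such bounds to get $3\mu\phi$. Your decomposition for $\gamma(\phi_0)$ is also the paper's: write $\sum_i\vect{Z}_i\indict\{\gege_i\}$ as $k$ times the averaged patch on its own activation event plus an error, expand the second moment, drop the positive-semidefinite square of the error, and charge the two cross terms to the closeness of patches.

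The gap is precisely the step you defer to ``careful bookkeeping.'' The quantity you must control is $2\,\expect\left[\abs{\vect{v}^\top\vect{Z}_{avg}\indict\{E\}}\cdot\abs{\delta}\right]$, and neither of its two sources is $O(1-\cos\rho)$ under the estimates you list. On the event where $\indict\{\gege_i\}$ and $\indict\{E\}$ agree, the worst-case bound over unit $\vect{v}$ on $\abs{\vect{v}^\top(\vect{Z}_i-\vect{Z}_{avg})}$ is $\norm{\vect{Z}_i-\vect{Z}_{avg}}_2\le\sqrt{2(1-\cos\rho)}$, which is order $\rho$, not order $\rho^2=\Theta(1-\cos\rho)$. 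On the disagreement event, $\delta$ is of size $\Theta(1/k)$ per offending patch on a set whose probability the margin hypothesis controls only at order $\mu\rho$ --- again order $\rho$, and moreover that event involves proximity to the hyperplane $\{\vect{w}\}^\perp$, about which the stated hypothesis (which concerns only $\vect{w}_*$, and only for widths $\phi\le\rho$) says nothing directly. So the argument as sketched yields a degradation of order $\sqrt{1-\cos\rho}+\mu\rho$, not $4(1-\cos\rho)$, and an additional idea (or a weaker conclusion) is needed to reach the advertised constant. You should be aware that the paper's own proof compresses this entire step into the single claim $\norm{\vect{Z}(\vect{Z}_i\indict\{\gege_i\}-\vect{Z})^\top}_{op}\le2(1-\cos\rho)$ ``by the law of cosines''; that estimate is valid for the inner product $\abs{\langle\vect{Z},\vect{Z}_i-\vect{Z}\rangle}$ but not for the operator norm of the rank-one outer product (which is $\norm{\vect{Z}}_2\norm{\vect{Z}_i-\vect{Z}}_2$), and it ignores the indicator mismatch entirely. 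In short: the difficulty you flag is real, your sketch does not resolve it, and it is the one step where the paper's proof is also not airtight.
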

Several comments are in sequel.
We view $\rho$ as a quantitative measure of the closeness between different patches, i.e., $\rho$ small means they are similar.

This lower bound is monotonically decreasing as a function of $\rho$ and note when $\rho= 0$, 
$\sigma_{\min}\left(
\expect\left[\vect{Z}_{\gege}\vect{Z}_{\gege}^\top\right] 
\right) = \gamma_{avg}(\phi_0)$
 which recovers Definition~\ref{def:eigens_and_angle}.

For the upper bond on $L_{\cross}$, $\mu$ represents the upper bound of the probability density around the decision boundary.
For example if $\prob\left[\theta\left( \vect{Z}_i, \vect{w}_* \right)\in \left[\frac{\pi}{2} - \phi, \frac{\pi}{2}+\phi\right]\right] \propto \phi^2$, then for $\phi$ in a small neighborhood around $\pi/2$, say radius $\epsilon$, we have $\prob\left[\theta\left( \vect{Z}_i, \vect{w}_*\right) \in \left[\frac{\pi}{2} - \phi, \frac{\pi}{2}+\phi\right]\right] \lesssim \epsilon\phi$.
This assumption is usually satisfied in real world examples like images because the image patches are not usually close to the decision boundary.
For example, in computer vision, the local image patches often form clusters and is not evenly distributed over the appearance space. 
Therefore, if we use linear classifier to separate their cluster centers from the rest of the clusters, near the decision boundary the probability mass should be very low.

\subsection{The Power of Random Initialization}\label{sec:init}
For one-layer one-neuron model, we need initialization $\norm{\vect{w}_0 -\vect{w}_*}_2 < \norm{\vect{w}_*}_2$ and for the convolution filter, we need a stronger initialization $\norm{\vect{w}_0-\vect{w}_*}_2 < \norm{\vect{w}_*}_2\cos\left(\phi_*\right)$.
The following theorem shows with uniformly random initialization we have constant probability to obtain a good initialization.
Note with this theorem at hand, we can boost the success probability to arbitrary close to $1$ by random restarts.
The proof is similar to~\citep{tian2017analytical}.
\begin{thm}\label{thm:random_init}
    If we uniformly sample $\vect{w}_0$ from a $p$-dimensional ball with radius $\alpha\|\vw_*\|$ so that $\alpha \le \sqrt{\frac{1}{2\pi p}}$, then with probability at least $\frac{1}{2}-\sqrt{\frac{\pi p}{2}}\alpha$, we have $\norm{\vect{w}_0-\vect{w}_*}_2 \le \sqrt{1-\alpha^2}\|\vw_*\|$.
\end{thm}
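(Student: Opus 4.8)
The plan is to reduce the statement to a purely geometric fact about a uniformly random point $\vect{w}_0$ in the $p$-dimensional ball $B(\mathbf{0}, \alpha\|\vect{w}_*\|)$, together with a concentration-type estimate on how much of that ball lies close to $\vect{w}_*$. First I would write $\vect{w}_0 = \alpha\|\vect{w}_*\|\, \vect{u}$ where $\vect{u}$ ranges over the unit ball, and observe that the event $\norm{\vect{w}_0-\vect{w}_*}_2 \le \sqrt{1-\alpha^2}\|\vect{w}_*\|$ is, after squaring and expanding $\norm{\vect{w}_0-\vect{w}_*}_2^2 = \norm{\vect{w}_0}_2^2 - 2\langle \vect{w}_0,\vect{w}_*\rangle + \norm{\vect{w}_*}_2^2$, equivalent to a lower bound on the inner product $\langle \vect{w}_0, \vect{w}_*\rangle$. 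Concretely, since $\norm{\vect{w}_0}_2 \le \alpha\|\vect{w}_*\|$, the event is implied by $\langle \vect{w}_0, \vect{w}_*\rangle \ge \tfrac12\alpha^2\|\vect{w}_*\|_2^2$ (one checks $\alpha^2 - 2\cdot\tfrac{\alpha^2}{2} + 1 = 1-\alpha^2$, so the worst case $\norm{\vect{w}_0}_2 = \alpha\|\vect{w}_*\|$ is exactly the boundary). So it suffices to lower bound $\prob\!\left[\langle \vect{w}_0,\vect{w}_*\rangle \ge \tfrac12\alpha^2\|\vect{w}_*\|_2^2\right]$.

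Next I would split this probability according to the sign of the ``height'' coordinate of $\vect{w}_0$ along $\vect{w}_*$. Write $\vect{w}_0 = t\,\hat{\vect{w}}_* + \vect{v}$ with $\hat{\vect{w}}_* = \vect{w}_*/\|\vect{w}_*\|$ and $\vect{v}\perp\vect{w}_*$; then $\langle \vect{w}_0,\vect{w}_*\rangle = t\|\vect{w}_*\|$, so the target event is $t \ge \tfrac12\alpha^2\|\vect{w}_*\|$. By symmetry of the uniform distribution on the ball under $t\mapsto -t$, $\prob[t\ge 0] = \tfrac12$, and the ``loss'' term is $\prob\!\left[0 \le t < \tfrac12\alpha^2\|\vect{w}_*\|\right]$, a thin slab of the ball of width $\tfrac12\alpha^2\|\vect{w}_*\|$ near the equator. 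I would bound the volume of this slab by the volume of the equatorial cross-section (a $(p-1)$-ball of radius $\alpha\|\vect{w}_*\|$) times the width, divided by the volume of the full $p$-ball of radius $\alpha\|\vect{w}_*\|$; the radius scaling cancels, leaving a ratio $\tfrac12\alpha^2\|\vect{w}_*\| \cdot \frac{V_{p-1}}{\alpha\|\vect{w}_*\|\, V_p} = \tfrac{\alpha}{2}\cdot\frac{V_{p-1}}{V_p}$, where $V_p, V_{p-1}$ are unit-ball volumes. Using the standard bound $\frac{V_{p-1}}{V_p} \le \sqrt{\frac{p}{2\pi}}$ (equivalently $\sqrt{p/(2\pi)}$, from $V_p = \pi^{p/2}/\Gamma(p/2+1)$ and Gautschi-type bounds on the Gamma ratio) gives loss $\le \tfrac{\alpha}{2}\sqrt{\tfrac{p}{2\pi}}$, and hence $\prob[\text{good init}] \ge \tfrac12 - \tfrac{\alpha}{2}\sqrt{\tfrac{p}{2\pi}} = \tfrac12 - \sqrt{\tfrac{\pi p}{2}}\cdot\tfrac{\alpha}{2\pi}$; I would double-check the constant against the stated $\tfrac12 - \sqrt{\tfrac{\pi p}{2}}\,\alpha$, absorbing any discrepancy into the slack already present (the condition $\alpha\le\sqrt{1/(2\pi p)}$ keeps the probability positive).

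The main obstacle I anticipate is getting the constant in the $V_{p-1}/V_p$ bound clean enough and, relatedly, making the slab-volume estimate rigorous rather than heuristic: the equatorial cross-section over-counts, which is fine (it only helps), but one must be careful that the cross-sectional radius at height $t$ is $\sqrt{(\alpha\|\vect{w}_*\|)^2 - t^2} \le \alpha\|\vect{w}_*\|$, so bounding every slice by the equatorial slice is valid and the integral over $t\in[0,\tfrac12\alpha^2\|\vect{w}_*\|]$ is at most width times max-slice. Everything else — the reduction to the inner-product event, the sign symmetry, the cancellation of the radius — is routine. Since the paper says ``the proof is similar to~\citep{tian2017analytical},'' I would follow that template for the Gamma-function estimate and simply cite the standard inequality $\Gamma(x+1)/\Gamma(x+1/2) \ge \sqrt{x}$ (or its reciprocal form) to close the bound.
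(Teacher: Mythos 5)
Your strategy is the same as the paper's (and Tian's): reduce to a half-ball-minus-equatorial-slab computation, bound the slab by width times the equatorial $(p-1)$-dimensional cross-section, and control $V_{p-1}/V_p$ via Gautschi's inequality on the Gamma ratio. One concrete slip: the threshold on the inner product is wrong by a factor of $2$. Expanding $\norm{\vect{w}_0-\vect{w}_*}_2^2\le(1-\alpha^2)\norm{\vect{w}_*}_2^2$ gives $2\langle\vect{w}_0,\vect{w}_*\rangle\ge\norm{\vect{w}_0}_2^2+\alpha^2\norm{\vect{w}_*}_2^2$, so in the worst case $\norm{\vect{w}_0}_2=\alpha\norm{\vect{w}_*}_2$ the sufficient condition is $\langle\vect{w}_0,\vect{w}_*\rangle\ge\alpha^2\norm{\vect{w}_*}_2^2$, not $\tfrac12\alpha^2\norm{\vect{w}_*}_2^2$; your check ``$\alpha^2-2\cdot\tfrac{\alpha^2}{2}+1=1-\alpha^2$'' actually evaluates to $1$, which is how the factor was lost. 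With the corrected slab width $\alpha^2\norm{\vect{w}_*}_2$ the loss term becomes $\alpha\,V_{p-1}/V_p\le\alpha\sqrt{(p/2+1)/\pi}$ (Gautschi gives $\Gamma(p/2+1)/\Gamma(p/2+1/2)<\sqrt{p/2+1}$, so your stated $\sqrt{p/(2\pi)}$ is also slightly optimistic), which is still below the theorem's generous $\sqrt{\pi p/2}\,\alpha$ for all $p\ge1$, so the conclusion survives; just fix the threshold and the Gamma-ratio constant when writing it up.
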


To apply this general initialization theorem to our convolution filter case, we can choose $\alpha = \cos\phi_*$. Therefore, with some simple algebra we have the following corollary.
\begin{cor}\label{cor:init_for_conv}
Suppose $\cos\left(\phi_*\right) < \frac{1}{\sqrt{8\pi p}}$, then if $\vect{w}_0$ is uniformly sampled from a ball with center $\vect{0}$ and radius $\norm{\vect{w}_*}\cos\left(\phi_*\right)$, we have with probability at least $\frac{1}{2}-\cos\left(\phi_*\right)\sqrt{\frac{\pi p}{2}}>\frac14.$
\end{cor}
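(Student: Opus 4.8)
The plan is to read off Corollary~\ref{cor:init_for_conv} as a direct specialization of Theorem~\ref{thm:random_init}, so essentially no new argument is needed beyond a substitution and a short calculation. Concretely, I would apply Theorem~\ref{thm:random_init} with $\alpha = \cos\phi_*$: sampling $\vect{w}_0$ uniformly from the $p$-dimensional ball of radius $\alpha\norm{\vect{w}_*}_2 = \norm{\vect{w}_*}_2\cos\phi_*$ is exactly the initialization scheme in the statement of the corollary, so the two setups coincide.

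First I would check the precondition of Theorem~\ref{thm:random_init}, namely $\alpha \le \sqrt{\tfrac{1}{2\pi p}}$. This is immediate from the hypothesis, since $\cos\phi_* < \tfrac{1}{\sqrt{8\pi p}} = \tfrac12\sqrt{\tfrac{1}{2\pi p}} < \sqrt{\tfrac{1}{2\pi p}}$. Theorem~\ref{thm:random_init} then gives, with probability at least $\tfrac12 - \sqrt{\tfrac{\pi p}{2}}\,\alpha = \tfrac12 - \cos\phi_*\sqrt{\tfrac{\pi p}{2}}$, the bound $\norm{\vect{w}_0 - \vect{w}_*}_2 \le \sqrt{1-\alpha^2}\,\norm{\vect{w}_*}_2 = \sqrt{1-\cos^2\phi_*}\,\norm{\vect{w}_*}_2 = \sin\phi_*\,\norm{\vect{w}_*}_2$; equivalently $\phi_0 = \arcsin(\norm{\vect{w}_0-\vect{w}_*}_2/\norm{\vect{w}_*}_2) \le \phi_*$, which is precisely the quality of initialization demanded by Theorems~\ref{thm:multi_patches_convergence_rate} and~\ref{thm:multi_patch_rate_sgd}. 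Finally I would lower-bound this probability by $\tfrac14$: using the hypothesis once more, $\cos\phi_*\sqrt{\tfrac{\pi p}{2}} < \tfrac{1}{\sqrt{8\pi p}}\cdot\sqrt{\tfrac{\pi p}{2}} = \sqrt{\tfrac{\pi p}{16\pi p}} = \tfrac14$, so $\tfrac12 - \cos\phi_*\sqrt{\tfrac{\pi p}{2}} > \tfrac14$. Random restarts then boost this constant success probability to any desired $1-\delta$.

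There is no genuine obstacle here: the whole proof is a one-line substitution into Theorem~\ref{thm:random_init} together with the two elementary inequalities above. The only point requiring care is the bookkeeping between the ball radius and the target distance bound: one must apply the identity $\sqrt{1-\cos^2\phi_*} = \sin\phi_*$ consistently so that the conclusion of Theorem~\ref{thm:random_init} lines up with the ``stronger initialization'' ($\phi_0 \le \phi_*$) required by the convolutional-filter results, and one must notice that the single numerical hypothesis $\cos\phi_* < 1/\sqrt{8\pi p}$ is doing double duty — it simultaneously verifies the precondition $\alpha \le \sqrt{1/(2\pi p)}$ of Theorem~\ref{thm:random_init} and forces the resulting probability strictly above $1/4$.
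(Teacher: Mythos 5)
Your proposal is correct and matches the paper's approach exactly: the paper likewise obtains the corollary by substituting $\alpha = \cos\phi_*$ into Theorem~\ref{thm:random_init} and performing the same two elementary checks (the precondition $\alpha \le \sqrt{1/(2\pi p)}$ and the bound $\cos\phi_*\sqrt{\pi p/2} < 1/4$). Your arithmetic is right, and your remark that the single hypothesis does double duty is a fair observation the paper leaves implicit.
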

The assumption of this corollary is satisfied if the patches are close to each other as discussed in the previous section.

\section{Experiments}
\label{sec:exp}
In this section we use simulations to verify our theoretical findings.
We first test how the smoothness affect the convergence rate in one-layer one-neuron model described in Section~\ref{sec:one_patch} 
To construct input distribution with different $L(\phi)$, $\gamma(\phi)$ and $\beta$ (c.f. Definition~\ref{def:eigens_and_angle} and Assumption~\ref{asmp:lispchitz_growth}), we fix the patch to have unit norm and use a mixture of truncated Gaussian distribution to model on the angle around $\vect{w}_*$ and around the $-\vect{w}_*$
Specifically, the probability density of $\angle \vect{Z},\vect{w}_*$ is sampled from 
$\frac{1}{2} N(0,\sigma) \indict_{\left[-\pi/2,\pi/2\right]}+ \frac{1}{2}N(-\pi,\sigma)\indict_{\left[-\pi/2,\pi/2\right]}.
$
Note by definitions of $L(\phi)$ and $\gamma(\phi)$ if $\sigma \rightarrow 0$ the probability mass is centered around $\vect{w}_*$, so the distribution is very spiky and $L(\phi)/\gamma(\phi)$ and $\beta$ will be large.
On the other hand, if $\sigma \rightarrow \infty$, then input distribution is close to the rotation invariant distribution and $L(\phi)/\gamma(\phi)$ and $\beta$ will be small.
Figure~\ref{fig:rates}a verifies our prediction where we fix the initialization and step size. 

Next we test how the closeness of patches affect the convergence rate in  the convolution setting. 
We first generate a single patch $\widetilde{\vect{Z}}$ using the above model with $\sigma = 1$, then generate each unit norm $\vect{Z}_i$ whose angle with $\widetilde{\vect{Z}}$, $\angle \vect{Z}_i, \widetilde{\vect{Z}}$ is sampled from  $\angle \vect{Z}_i, \widetilde{\vect{Z}} \sim N(0,\sigma_2) \indict_{[-\pi,\pi)}$. 
Figure~\ref{fig:rates}b shows as variance between patches becomes smaller, we obtain faster convergence rate, which coincides with Theorem~\ref{thm:multi_patches_convergence_rate}. 

We also test whether SGD can learn a filter on real world data. 
Here we choose MNIST data and generate labels using two filters. 
One is random filter where each entry is sampled from a standard Gaussian distribution (Figure~\ref{fig:random_truth}) and the other is a Gabor filter (Figure~\ref{fig:gabor_truth}).
Figure~\ref{fig:rates}a and Figure~\ref{fig:rates}c  show convergence rates of SGD with different initializations.
Here, better initializations give faster rates, which coincides our theory.
Note that here we report the relative loss, logarithm of squared error divided by the square of mean of data points instead of the difference between learned filter and true filter because we found SGD often cannot converge to the exact filter but rather a filter with near zero loss.
We believe this is because the data are approximately lying in a low dimensional manifold in which the learned filter and the true filter are equivalent.
To justify this conjecture, we try to interpolate the learned filter and the true filter linearly and the result filter has similar low loss (c.f. Figure~\ref{fig:interpolation}).
Lastly, we visualize the true filters and the learned filters in Figure~\ref{fig:visualization} and we can see that the they have similar patterns.

\begin{figure*}
	\centering
	\begin{subfigure}[t]{0.45\textwidth}
		\includegraphics[width=\textwidth]{./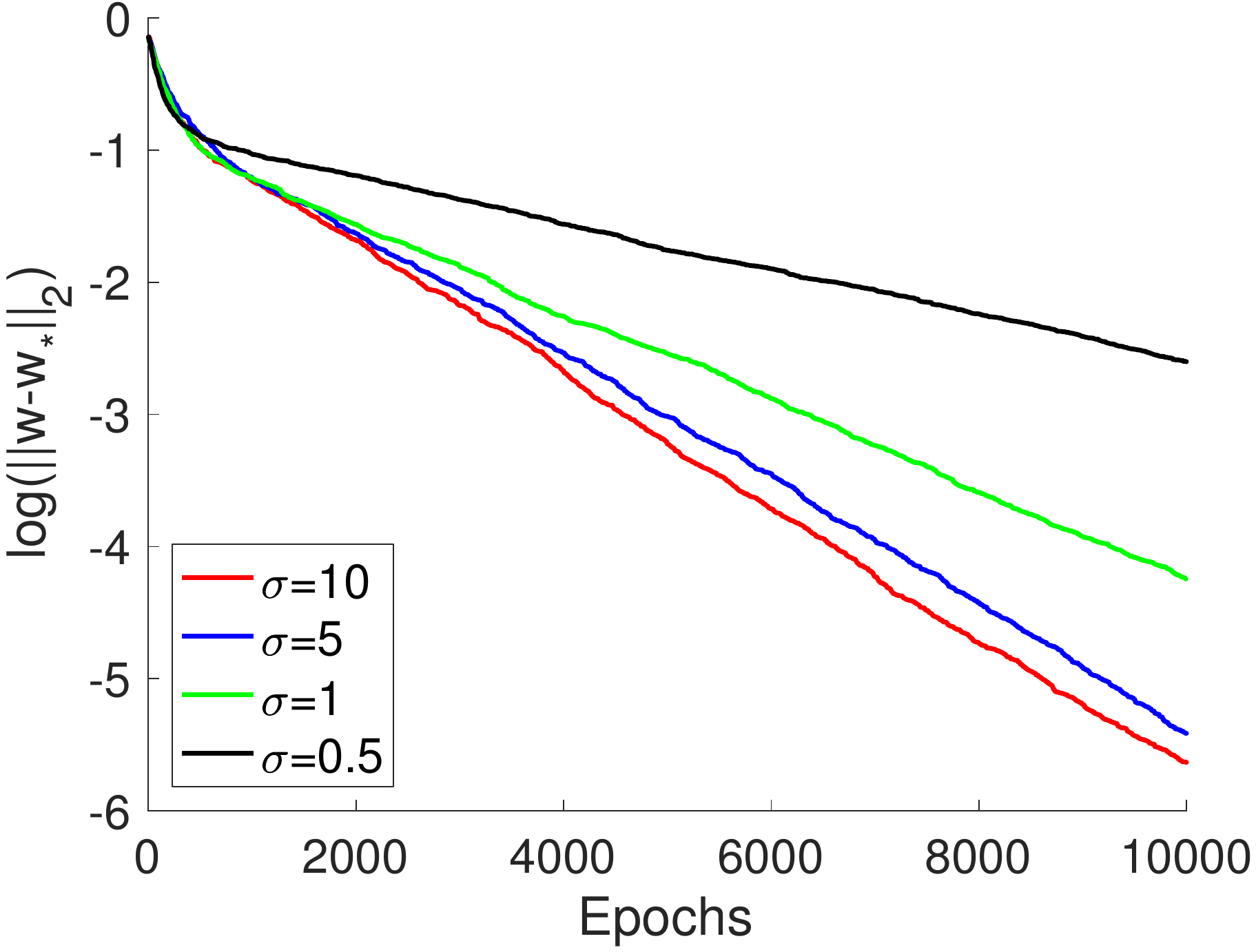}
	\end{subfigure}	
	\qquad
	\begin{subfigure}[t]{0.45\textwidth}
	\includegraphics[width=\textwidth]{./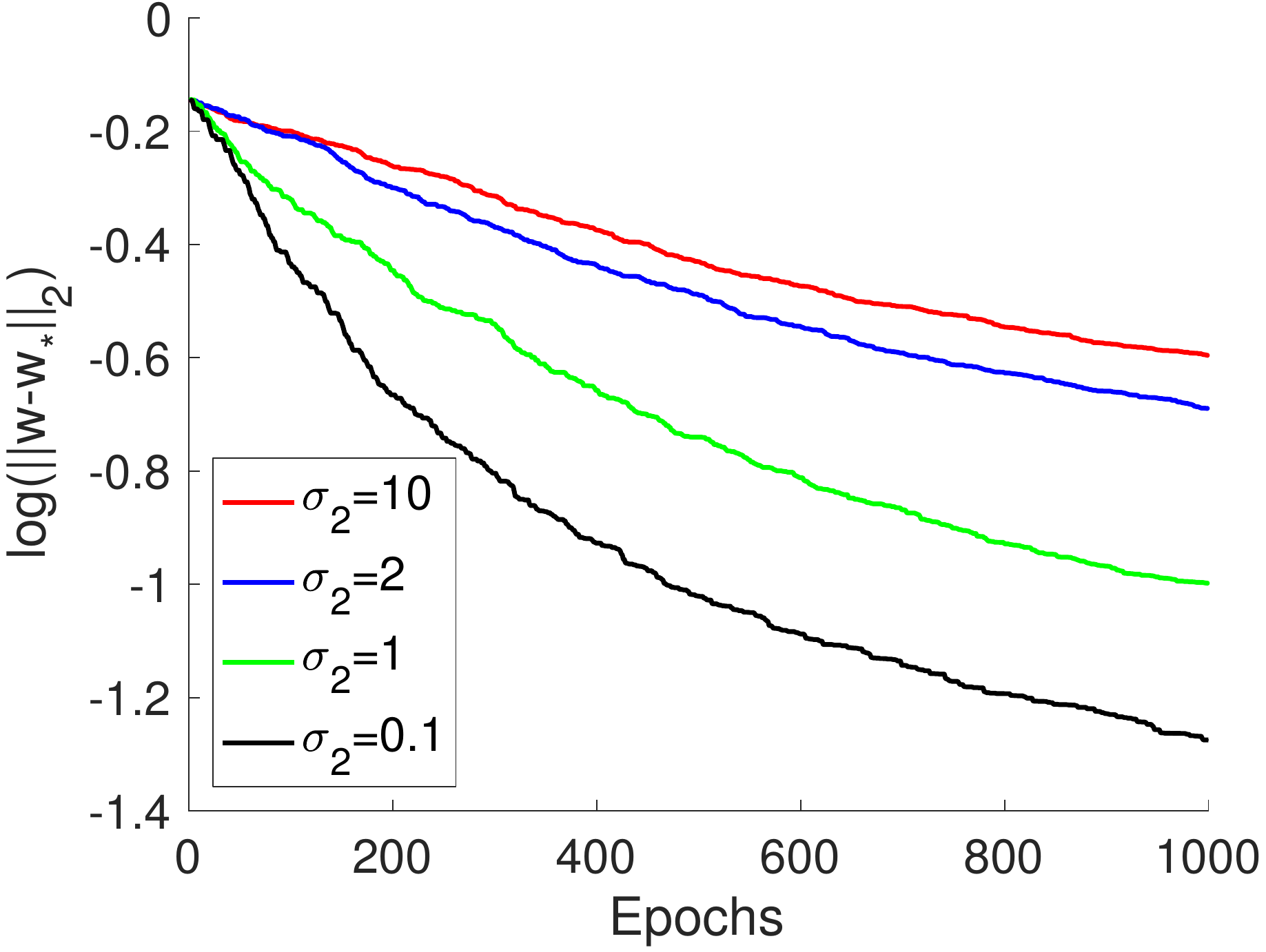}
	\end{subfigure}	
	\begin{subfigure}[t]{0.45\textwidth}
		\includegraphics[width=\textwidth]{./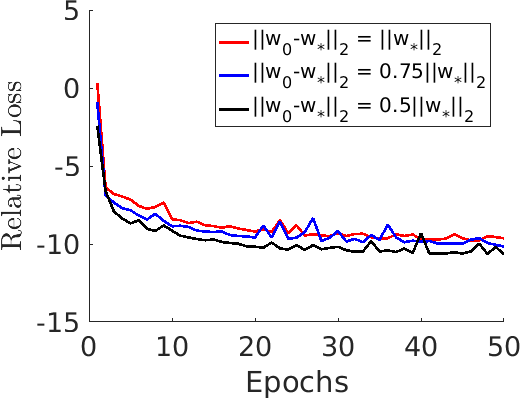}

	\end{subfigure}	
	\qquad
	\begin{subfigure}[t]{0.45\textwidth}
		\includegraphics[width=\textwidth]{./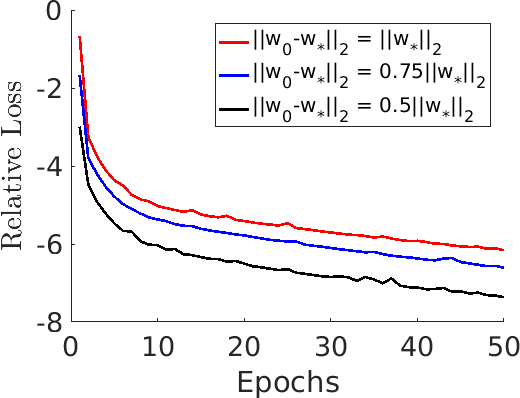}

	\end{subfigure}	

	\caption{Convergence rates of SGD 
\textbf{(a)} with different smoothness where larger $\sigma$ is smoother;
\textbf{(b)} with different closeness of patches where smaller $\sigma_2$ is closer;
\text{(c)} for a learning a random filter with different initialization on MNIST data;
\textbf{(d)} for a learning a Gabor filter with different initialization on MNIST data.
}\label{fig:rates}
\end{figure*}

\begin{figure*}
	\centering
	\begin{subfigure}[t]{0.46\textwidth}
		\includegraphics[width=\textwidth]{./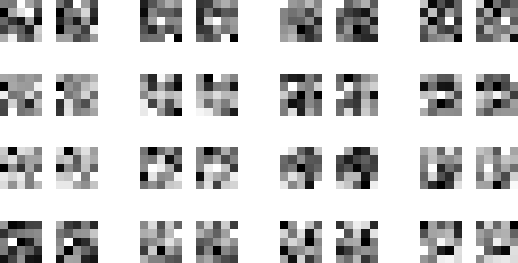}
		\caption{Random generated target filters.}\label{fig:random_truth}
	\end{subfigure}	
	\qquad
	\begin{subfigure}[t]{0.46\textwidth}
		\includegraphics[width=\textwidth]{./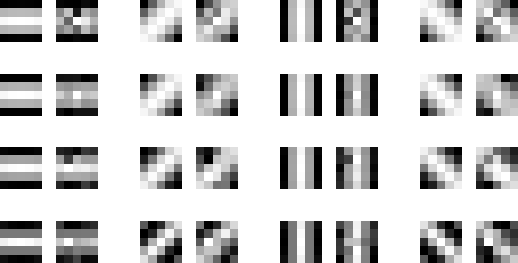}
		\caption{Gabor filters.}\label{fig:gabor_truth}
	\end{subfigure}	
	\caption{Visualization of true and learned filters. 
		For each pair, the left one is the underlying truth and the right is the filter learned by SGD.
		}\label{fig:visualization}
\end{figure*}

\section{Conclusions and Future Works}
\label{sec:con}
In this paper we provide the first recovery guarantee of (stochastic) gradient descent algorithm with random initialization for learning a convolution filter when the input distribution is not Gaussian.
Our analyses only used the definition of ReLU and some mild structural assumptions on the input distribution.
Here we list some future directions.

One possibility is to extend our result to deeper and wider architectures.
Even for two-layer fully-connected network, the convergence of (stochastic) gradient descent with random initialization is not known.
Existing results either requires sufficiently good initialization~\citep{zhong2017recovery} or relies on special architecture~\citep{li2017convergence}.
However, we believe the insights from this paper is helpful to understand the behaviors of gradient-based algorithms in these settings.

Another direction is to consider the agnostic setting, where the label is not equal to the output of a neural network.
This will lead to different dynamics of (stochastic) gradient descent and we may need to analyze the robustness of the optimization procedures.
This problem is also related to the expressiveness of the neural network~\citep{raghu2016expressive} where if the underlying function is not equal bot is close  to a neural network.
We believe our analysis can be extend to this setting.

\subsubsection*{Acknowledgment}
\label{sec:ack}
The authors would like to thank Hanzhang Hu, Tengyu Ma, Yuanzhi Li, Jialei Wang and Kai Zhong for useful discussions.

\bibliography{simonduref}

\begin{thebibliography}{41}
\providecommand{\natexlab}[1]{#1}
\providecommand{\url}[1]{\texttt{#1}}
\expandafter\ifx\csname urlstyle\endcsname\relax
  \providecommand{\doi}[1]{doi: #1}\else
  \providecommand{\doi}{doi: \begingroup \urlstyle{rm}\Url}\fi

\bibitem[Blum \& Rivest(1989)Blum and Rivest]{blum1989training}
Avrim Blum and Ronald~L Rivest.
\newblock Training a 3-node neural network is np-complete.
\newblock In \emph{Advances in neural information processing systems}, pp.\
  494--501, 1989.

\bibitem[Brutzkus \& Globerson(2017)Brutzkus and
  Globerson]{brutzkus2017globally}
Alon Brutzkus and Amir Globerson.
\newblock Globally optimal gradient descent for a convnet with gaussian inputs.
\newblock \emph{arXiv preprint arXiv:1702.07966}, 2017.

\bibitem[Choromanska et~al.(2015)Choromanska, Henaff, Mathieu, Arous, and
  LeCun]{choromanska2015loss}
Anna Choromanska, Mikael Henaff, Michael Mathieu, G{\'e}rard~Ben Arous, and
  Yann LeCun.
\newblock The loss surfaces of multilayer networks.
\newblock In \emph{Artificial Intelligence and Statistics}, pp.\  192--204,
  2015.

\bibitem[Dauphin et~al.(2016)Dauphin, Fan, Auli, and
  Grangier]{dauphin2016language}
Yann~N Dauphin, Angela Fan, Michael Auli, and David Grangier.
\newblock Language modeling with gated convolutional networks.
\newblock \emph{arXiv preprint arXiv:1612.08083}, 2016.

\bibitem[Du et~al.(2017)Du, Jin, Lee, Jordan, Poczos, and
  Singh]{du2017gradient}
Simon~S Du, Chi Jin, Jason~D Lee, Michael~I Jordan, Barnabas Poczos, and Aarti
  Singh.
\newblock Gradient descent can take exponential time to escape saddle points.
\newblock \emph{arXiv preprint arXiv:1705.10412}, 2017.

\bibitem[Freeman \& Bruna(2016)Freeman and Bruna]{freeman2016topology}
C~Daniel Freeman and Joan Bruna.
\newblock Topology and geometry of half-rectified network optimization.
\newblock \emph{arXiv preprint arXiv:1611.01540}, 2016.

\bibitem[Gautier et~al.(2016)Gautier, Nguyen, and Hein]{gautier2016globally}
Antoine Gautier, Quynh~N Nguyen, and Matthias Hein.
\newblock Globally optimal training of generalized polynomial neural networks
  with nonlinear spectral methods.
\newblock In \emph{Advances in Neural Information Processing Systems}, pp.\
  1687--1695, 2016.

\bibitem[Ge et~al.(2015)Ge, Huang, Jin, and Yuan]{ge2015escaping}
Rong Ge, Furong Huang, Chi Jin, and Yang Yuan.
\newblock Escaping from saddle points—online stochastic gradient for tensor
  decomposition.
\newblock In \emph{Proceedings of The 28th Conference on Learning Theory}, pp.\
   797--842, 2015.

\bibitem[Goel \& Klivans(2017)Goel and Klivans]{goel2017learning}
Surbhi Goel and Adam Klivans.
\newblock Learning depth-three neural networks in polynomial time.
\newblock \emph{arXiv preprint arXiv:1709.06010}, 2017.

\bibitem[Goel et~al.(2016)Goel, Kanade, Klivans, and Thaler]{goel2016reliably}
Surbhi Goel, Varun Kanade, Adam Klivans, and Justin Thaler.
\newblock Reliably learning the relu in polynomial time.
\newblock \emph{arXiv preprint arXiv:1611.10258}, 2016.

\bibitem[Haeffele \& Vidal(2015)Haeffele and Vidal]{haeffele2015global}
Benjamin~D Haeffele and Ren{\'e} Vidal.
\newblock Global optimality in tensor factorization, deep learning, and beyond.
\newblock \emph{arXiv preprint arXiv:1506.07540}, 2015.

\bibitem[Hardt \& Ma(2016)Hardt and Ma]{hardt2016identity}
Moritz Hardt and Tengyu Ma.
\newblock Identity matters in deep learning.
\newblock \emph{arXiv preprint arXiv:1611.04231}, 2016.

\bibitem[He et~al.(2016)He, Zhang, Ren, and Sun]{he2016deep}
Kaiming He, Xiangyu Zhang, Shaoqing Ren, and Jian Sun.
\newblock Deep residual learning for image recognition.
\newblock In \emph{Proceedings of the IEEE conference on computer vision and
  pattern recognition}, pp.\  770--778, 2016.

\bibitem[Janzamin et~al.(2015)Janzamin, Sedghi, and
  Anandkumar]{janzamin2015beating}
Majid Janzamin, Hanie Sedghi, and Anima Anandkumar.
\newblock Beating the perils of non-convexity: Guaranteed training of neural
  networks using tensor methods.
\newblock \emph{arXiv preprint arXiv:1506.08473}, 2015.

\bibitem[Jin et~al.(2017)Jin, Ge, Netrapalli, Kakade, and
  Jordan]{jin2017escape}
Chi Jin, Rong Ge, Praneeth Netrapalli, Sham~M Kakade, and Michael~I Jordan.
\newblock How to escape saddle points efficiently.
\newblock \emph{arXiv preprint arXiv:1703.00887}, 2017.

\bibitem[Kawaguchi(2016)]{kawaguchi2016deep}
Kenji Kawaguchi.
\newblock Deep learning without poor local minima.
\newblock In \emph{Advances in Neural Information Processing Systems}, pp.\
  586--594, 2016.

\bibitem[Krizhevsky et~al.(2012)Krizhevsky, Sutskever, and
  Hinton]{krizhevsky2012imagenet}
Alex Krizhevsky, Ilya Sutskever, and Geoffrey~E Hinton.
\newblock Imagenet classification with deep convolutional neural networks.
\newblock In \emph{Advances in neural information processing systems}, pp.\
  1097--1105, 2012.

\bibitem[Lee et~al.(2016)Lee, Simchowitz, Jordan, and Recht]{pmlr-v49-lee16}
Jason~D Lee, Max Simchowitz, Michael~I Jordan, and Benjamin Recht.
\newblock Gradient descent only converges to minimizers.
\newblock In \emph{Conference on Learning Theory}, pp.\  1246--1257, 2016.

\bibitem[Li \& Yuan(2017)Li and Yuan]{li2017convergence}
Yuanzhi Li and Yang Yuan.
\newblock Convergence analysis of two-layer neural networks with relu
  activation.
\newblock \emph{arXiv preprint arXiv:1705.09886}, 2017.

\bibitem[Lin et~al.(2013)Lin, Chen, and Yan]{lin2013network}
Min Lin, Qiang Chen, and Shuicheng Yan.
\newblock Network in network.
\newblock \emph{arXiv preprint arXiv:1312.4400}, 2013.

\bibitem[Lin \& Xiao(2014)Lin and Xiao]{lin2014adaptive}
Qihang Lin and Lin Xiao.
\newblock An adaptive accelerated proximal gradient method and its homotopy
  continuation for sparse optimization.
\newblock In \emph{International Conference on Machine Learning}, pp.\  73--81,
  2014.

\bibitem[Livni et~al.(2014)Livni, Shalev-Shwartz, and
  Shamir]{livni2014computational}
Roi Livni, Shai Shalev-Shwartz, and Ohad Shamir.
\newblock On the computational efficiency of training neural networks.
\newblock In \emph{Advances in Neural Information Processing Systems}, pp.\
  855--863, 2014.

\bibitem[Mei et~al.(2016)Mei, Bai, and Montanari]{mei2016landscape}
Song Mei, Yu~Bai, and Andrea Montanari.
\newblock The landscape of empirical risk for non-convex losses.
\newblock \emph{arXiv preprint arXiv:1607.06534}, 2016.

\bibitem[Milletari et~al.(2016)Milletari, Navab, and Ahmadi]{milletari2016v}
Fausto Milletari, Nassir Navab, and Seyed-Ahmad Ahmadi.
\newblock V-net: Fully convolutional neural networks for volumetric medical
  image segmentation.
\newblock In \emph{3D Vision (3DV), 2016 Fourth International Conference on},
  pp.\  565--571. IEEE, 2016.

\bibitem[Nguyen \& Hein(2017)Nguyen and Hein]{nguyen2017loss}
Quynh Nguyen and Matthias Hein.
\newblock The loss surface of deep and wide neural networks.
\newblock \emph{arXiv preprint arXiv:1704.08045}, 2017.

\bibitem[Raghu et~al.(2016)Raghu, Poole, Kleinberg, Ganguli, and
  Sohl-Dickstein]{raghu2016expressive}
Maithra Raghu, Ben Poole, Jon Kleinberg, Surya Ganguli, and Jascha
  Sohl-Dickstein.
\newblock On the expressive power of deep neural networks.
\newblock \emph{arXiv preprint arXiv:1606.05336}, 2016.

\bibitem[Safran \& Shamir(2016)Safran and Shamir]{safran2016quality}
Itay Safran and Ohad Shamir.
\newblock On the quality of the initial basin in overspecified neural networks.
\newblock In \emph{International Conference on Machine Learning}, pp.\
  774--782, 2016.

\bibitem[Sedghi \& Anandkumar(2014)Sedghi and Anandkumar]{sedghi2014provable}
Hanie Sedghi and Anima Anandkumar.
\newblock Provable methods for training neural networks with sparse
  connectivity.
\newblock \emph{arXiv preprint arXiv:1412.2693}, 2014.

\bibitem[Shalev-Shwartz et~al.(2017{\natexlab{a}})Shalev-Shwartz, Shamir, and
  Shammah]{shalev2017failures}
Shai Shalev-Shwartz, Ohad Shamir, and Shaked Shammah.
\newblock Failures of gradient-based deep learning.
\newblock In \emph{International Conference on Machine Learning}, pp.\
  3067--3075, 2017{\natexlab{a}}.

\bibitem[Shalev-Shwartz et~al.(2017{\natexlab{b}})Shalev-Shwartz, Shamir, and
  Shammah]{shalev2017weight}
Shai Shalev-Shwartz, Ohad Shamir, and Shaked Shammah.
\newblock Weight sharing is crucial to succesful optimization.
\newblock \emph{arXiv preprint arXiv:1706.00687}, 2017{\natexlab{b}}.

\bibitem[Shamir(2016)]{shamir2016distribution}
Ohad Shamir.
\newblock Distribution-specific hardness of learning neural networks.
\newblock \emph{arXiv preprint arXiv:1609.01037}, 2016.

\bibitem[Silver et~al.(2016)Silver, Huang, Maddison, Guez, Sifre, Van
  Den~Driessche, Schrittwieser, Antonoglou, Panneershelvam, Lanctot,
  et~al.]{silver2016mastering}
David Silver, Aja Huang, Chris~J Maddison, Arthur Guez, Laurent Sifre, George
  Van Den~Driessche, Julian Schrittwieser, Ioannis Antonoglou, Veda
  Panneershelvam, Marc Lanctot, et~al.
\newblock Mastering the game of go with deep neural networks and tree search.
\newblock \emph{Nature}, 529\penalty0 (7587):\penalty0 484--489, 2016.

\bibitem[{\v{S}}{\'\i}ma(2002)]{vsima2002training}
Ji{\v{r}}{\'\i} {\v{S}}{\'\i}ma.
\newblock Training a single sigmoidal neuron is hard.
\newblock \emph{Neural Computation}, 14\penalty0 (11):\penalty0 2709--2728,
  2002.

\bibitem[Soltanolkotabi(2017)]{soltanolkotabi2017learning}
Mahdi Soltanolkotabi.
\newblock Learning relus via gradient descent.
\newblock \emph{arXiv preprint arXiv:1705.04591}, 2017.

\bibitem[Soltanolkotabi et~al.(2017)Soltanolkotabi, Javanmard, and
  Lee]{soltanolkotabi2017theoretical}
Mahdi Soltanolkotabi, Adel Javanmard, and Jason~D Lee.
\newblock Theoretical insights into the optimization landscape of
  over-parameterized shallow neural networks.
\newblock \emph{arXiv preprint arXiv:1707.04926}, 2017.

\bibitem[Szegedy et~al.(2017)Szegedy, Ioffe, Vanhoucke, and
  Alemi]{szegedy2017inception}
Christian Szegedy, Sergey Ioffe, Vincent Vanhoucke, and Alexander~A Alemi.
\newblock Inception-v4, inception-resnet and the impact of residual connections
  on learning.
\newblock In \emph{AAAI}, pp.\  4278--4284, 2017.

\bibitem[Tian(2017)]{tian2017analytical}
Yuandong Tian.
\newblock An analytical formula of population gradient for two-layered relu
  network and its applications in convergence and critical point analysis.
\newblock \emph{arXiv preprint arXiv:1703.00560}, 2017.

\bibitem[Zhang et~al.(2015)Zhang, Lee, Wainwright, and
  Jordan]{zhang2015learning}
Yuchen Zhang, Jason~D Lee, Martin~J Wainwright, and Michael~I Jordan.
\newblock Learning halfspaces and neural networks with random initialization.
\newblock \emph{arXiv preprint arXiv:1511.07948}, 2015.

\bibitem[Zhang et~al.(2016)Zhang, Lee, and Jordan]{zhang2016l1}
Yuchen Zhang, Jason~D Lee, and Michael~I Jordan.
\newblock l1-regularized neural networks are improperly learnable in polynomial
  time.
\newblock In \emph{International Conference on Machine Learning}, pp.\
  993--1001, 2016.

\bibitem[Zhong et~al.(2017)Zhong, Song, Jain, Bartlett, and
  Dhillon]{zhong2017recovery}
Kai Zhong, Zhao Song, Prateek Jain, Peter~L Bartlett, and Inderjit~S Dhillon.
\newblock Recovery guarantees for one-hidden-layer neural networks.
\newblock \emph{arXiv preprint arXiv:1706.03175}, 2017.

\bibitem[Zhou \& Feng(2017)Zhou and Feng]{zhou2017landscape}
Pan Zhou and Jiashi Feng.
\newblock The landscape of deep learning algorithms.
\newblock \emph{arXiv preprint arXiv:1705.07038}, 2017.

\end{thebibliography}
\bibliographystyle{iclr2018_conference}

\newpage
\appendix
\section{Proofs and Additional Theorems}
\label{sec:proofs}
\subsection{Proofs of the Theorem in Section~\ref{sec:one_patch}}

\begin{lem}\label{lem:key_observation}
\begin{align}
\langle\nabla_\vect{w}\ell\left(\vect{w}\right),\vect{w}-\vect{w}_*\rangle 
= \left(\vect{w}-\vect{w}_*\right)^\top \mat{A}_{\vect{w},\vect{w}_*}\left(\vect{w}-\vect{w}_*\right) + \left(\vect{w}-\vect{w}_*\right)^\top\mat{A}_{\vect{w},-\vect{w}_*}\vect{w}.\label{eqn:key_observation}
\end{align} and both terms are non-negative.
\end{lem}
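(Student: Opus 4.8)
The plan is to compute the population gradient explicitly from the $\ell_2$ loss, take its inner product with $\vect{w}-\vect{w}_*$, and then identify the two summands using the activation-region decomposition. First I would recall that $f(\vect{w},\mat{Z}) = \sigma(\vect{w}^\top\mat{Z})$ in the one-neuron case, so that
\begin{align*}
\nabla_\vect{w}\ell(\vect{w},\mat{Z}) = \left(\sigma(\vect{w}^\top\mat{Z}) - \sigma(\vect{w}_*^\top\mat{Z})\right)\indict\{\vect{w}^\top\mat{Z}\ge 0\}\,\mat{Z},
\end{align*}
using that $\sigma'(t) = \indict\{t\ge 0\}$ almost everywhere. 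Taking expectations and splitting the event $\{\vect{w}^\top\mat{Z}\ge 0\}$ into the two sub-events $\gege$ (where $\vect{w}_*^\top\mat{Z}\ge 0$, so $\sigma(\vect{w}_*^\top\mat{Z}) = \vect{w}_*^\top\mat{Z}$) and $\gele$ (where $\vect{w}_*^\top\mat{Z}\le 0$, so $\sigma(\vect{w}_*^\top\mat{Z}) = 0$), I get
\begin{align*}
\expect[\nabla_\vect{w}\ell(\vect{w},\mat{Z})] = \expect\left[\mat{Z}\mat{Z}^\top\indict\{\gege\}\right](\vect{w}-\vect{w}_*) + \expect\left[\mat{Z}\mat{Z}^\top\indict\{\gele\}\right]\vect{w} = \mat{A}_{\vect{w},\vect{w}_*}(\vect{w}-\vect{w}_*) + \mat{A}_{\vect{w},-\vect{w}_*}\vect{w},
\end{align*}
which is exactly the population gradient formula already stated in the excerpt. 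Dotting with $\vect{w}-\vect{w}_*$ then gives \eqref{eqn:key_observation} immediately.

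For non-negativity of the two terms: the first term $(\vect{w}-\vect{w}_*)^\top\mat{A}_{\vect{w},\vect{w}_*}(\vect{w}-\vect{w}_*)$ is a quadratic form in the positive semidefinite matrix $\mat{A}_{\vect{w},\vect{w}_*} = \expect[\mat{Z}\mat{Z}^\top\indict\{\gege\}]$, hence $\ge 0$. The second term requires a small pointwise argument: I would write
\begin{align*}
(\vect{w}-\vect{w}_*)^\top\mat{A}_{\vect{w},-\vect{w}_*}\vect{w} = \expect\left[\left((\vect{w}-\vect{w}_*)^\top\mat{Z}\right)\left(\vect{w}^\top\mat{Z}\right)\indict\{\gele\}\right],
\end{align*}
and on the event $\gele$ we have $\vect{w}^\top\mat{Z}\ge 0$ and $\vect{w}_*^\top\mat{Z}\le 0$, so $(\vect{w}-\vect{w}_*)^\top\mat{Z} = \vect{w}^\top\mat{Z} - \vect{w}_*^\top\mat{Z} \ge \vect{w}^\top\mat{Z}\ge 0$; thus the integrand is a product of two non-negative quantities and the expectation is non-negative.

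I do not expect any serious obstacle here — the lemma is essentially a bookkeeping computation. The only point requiring a little care is the differentiation of $\sigma$ at the kink: since the input distribution $\inputdist$ is assumed continuous, the set $\{\vect{w}^\top\mat{Z}=0\}$ has measure zero, so the subgradient ambiguity at $0$ is irrelevant and the formula for $\nabla_\vect{w}\ell$ holds in expectation without qualification. The second point to state cleanly is the sign reasoning on $\gele$, which is the "key observation" the paper advertises; everything else is routine linear algebra.
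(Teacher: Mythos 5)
Your proposal is correct and follows essentially the same route as the paper: the first term is non-negative because $\mat{A}_{\vect{w},\vect{w}_*}$ is positive semidefinite, and the second is handled by a pointwise sign check on the event $\gele$ (the paper splits that term into $\vect{w}^\top\mat{A}_{\vect{w},-\vect{w}_*}\vect{w}\ge 0$ plus $-\vect{w}_*^\top\mat{A}_{\vect{w},-\vect{w}_*}\vect{w}\ge 0$, whereas you check the product $\left(\left(\vect{w}-\vect{w}_*\right)^\top\vect{Z}\right)\left(\vect{w}^\top\vect{Z}\right)\ge 0$ in one step — a cosmetic difference). Your explicit derivation of the population gradient is a harmless addition; the paper simply asserts that formula in the main text.
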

\begin{proof}
Since $\mat{A}_{\vect{w},\vect{w}_*} \succeq 0$ and $\mat{A}_{\vect{w},-\vect{w}_*} \succeq 0$ (positive-semidefinite), both the first term and one part of the second term $\vect{w}^\top \mat{A}_{\vect{w},-\vect{w}_*}\vect{w}$ are non-negative. The other part of the second term is
\begin{align*}
	-\vect{w}_*^\top \mat{A}_{\vect{w},-\vect{w}_*}\vect{w} = -\expect\left[\left(\vect{w}_*^\top\vect{Z}\right)\left(\vect{w}^\top\vect{Z}\right)\indict\left\{\vect{w}^\top \vect{Z} \ge 0, \vect{w}_*^\top \vect{Z} \le 0 \right\}\right] \ge 0.
\end{align*}
\end{proof}

\begin{proof}[Proof of Theorem~\ref{thm:one_patch_converge}]
	The assumption on the input distribution ensures when $\theta\left(\vect{w},\vect{w}_*\right) \neq \pi$, $\mat{A}_{\vect{w},\vect{w}_*} \succ \mat{0}$ 
	and when $\theta\left(\vect{w},\vect{w}_*\right) \neq 0$, $\mat{A}_{\vect{w},-\vect{w}_*} \succ \mat{0}$.
	Now when gradient descent converges we have $\nabla_\vect{w}\ell\left(\vect{w}\right) = \vect{0}$.
	We have the following theorem.
	By assumption, since $\ell\left(\vect{w}\right) < \ell\left(\vect{0}\right)$ and gradient descent only decreases function value, we will not converge to $\vect{w}=\vect{0}$.
	Note that at any critical points, $\langle\nabla_\vect{w}\ell\left(\vect{w}\right),\vect{w}-\vect{w}_*\rangle = 0$, from Lemma~\ref{lem:key_observation}, we have:
	\begin{eqnarray}
	\left(\vect{w}-\vect{w}_*\right)^\top \mat{A}_{\vect{w},\vect{w}_*}\left(\vect{w}-\vect{w}_*\right)  &= 0 \label{eqn:1} \\
	\left(\vect{w}-\vect{w}_*\right)^\top\mat{A}_{\vect{w},-\vect{w}_*}\vect{w} &= 0.\label{eqn:2}
	\end{eqnarray}
	Suppose we are converging to a critical point $\vect{w} \neq \vect{w}_*$. There are two cases: 
	\begin{itemize}
	\item If $\theta\left(\vect{w},\vect{w}_*\right) \neq \pi$, then we have$ \left(\vect{w}-\vect{w}_*\right)^\top \mat{A}_{\vect{w},\vect{w}_*}\left(\vect{w}-\vect{w}_*\right) > 0$, which contradicts with Eqn.~\ref{eqn:1}. 
	
	\item If $\theta\left(\vect{w},\vect{w}_*\right) = \pi$, without loss of generality, let $\vect{w}=-\alpha \vect{w}_*$ for some $\alpha > 0$.
	By the assumption we know $\mat{A}_{\vect{w},-\vect{w}_*} \succ \mat{0}$.
	Now the second equation becomes
	$\left(\vect{w}-\vect{w}_*\right)^\top\mat{A}_{\vect{w},-\vect{w}_*}\vect{w} = (1+\gamma) \vect{w}_*\mat{A}_{\vect{w},-\vect{w}_*}\vect{w}_* >0$, which contradicts with Eqn.~\ref{eqn:2}.
	\end{itemize}
	Therefore we have $\vect{w}=\vect{w}_*$.
\end{proof}


\begin{proof}[Proof of Theorem~\ref{thm:one_patch_convergence_rate}]
	Our proof relies on the following simple but crucial observation: if $\norm{\vect{w}-\vect{w}_*}_2 < \norm{\vect{w}_*}_2$, then \begin{align*}
	\theta\left( \vect{w},\vect{w}_* \right)\le \arcsin\left(\frac{\norm{\vect{w}-\vect{w}_*}_2}{\norm{\vect{w}_*}_2}\right).
	\end{align*}
	We denote $\theta\left(\vect{w}_t,\vect{w}_*\right) = \theta_t$ and by the observation we have $\theta_t \le \phi_t$.
	Recall the gradient descent dynamics, \begin{align*}
	\vect{w}_{t+1} & = \vect{w}_t - \eta \nabla_{\vect{w}_t}\ell(\vect{w}_t) \\
	& = \vect{w}_t - \eta\left(\expect\left[\vect{Z}\vect{Z}^\top \indict\left\{\vect{w}_t^\top \vect{Z}\ge 0, \vect{w}_*^\top \vect{Z} \ge 0\right\}\right]\left(\vect{w}_t - \vect{w}_*\right) - \expect\left[\vect{w}^\top \vect{Z} \ge 0, \vect{w}_*^\top \vect{Z}\le 0\right]\vect{w}_t\right).
	\end{align*}
	Consider the squared distance to the optimal weight\begin{align*}
	&\norm{\vect{w}_{t+1}-\vect{w}_*}_2^2\\
	=&\norm{\vect{w}_t-\vect{w}_*}_2^2 \\
	&-\eta  \left(\vect{\vect{w}_t-\vect{w}_*}\right)^\top \left(\expect\left[\vect{Z}\vect{Z}^\top \indict\left\{\vect{w}_t^\top \vect{Z}\ge 0, \vect{w}_*^\top \vect{Z} \ge 0\right\}\right]\left(\vect{w}_t - \vect{w}_*\right) - \expect\left[\vect{w}^\top \vect{Z} \ge 0, \vect{w}_*^\top \vect{Z}\le 0\right]\vect{w}_t\right) \\
	& + \eta^2\norm{ \expect\left[\vect{Z}\vect{Z}^\top \indict\left\{\vect{w}_t^\top \vect{Z}\ge 0, \vect{w}_*^\top \vect{Z} \ge 0\right\}\right]\left(\vect{w}_t - \vect{w}_*\right) - \expect\left[\vect{w}^\top \vect{Z} \ge 0, \vect{w}_*^\top \vect{Z}\le 0\right]\vect{w}_t}_2^2.
	\end{align*}
	By our analysis in the previous section, the second term is smaller than \[
	-\eta  \left(\vect{\vect{w}_t-\vect{w}_*}\right)^\top \expect\left[\vect{Z}\vect{Z}^\top \indict\left\{\vect{w}_t^\top \vect{Z}\ge 0, \vect{w}_*^\top \vect{Z} \ge 0\right\}\right]\left(\vect{w}_t - \vect{w}_*\right) \le -\eta \gamma(\theta_t)\norm{\vect{w}_t - \vect{w}_*}_2^2
	\] where we have used our assumption on the angle.
	For the third term, we expand it as\begin{align*}
	&\norm{ \expect\left[\vect{Z}\vect{Z}^\top \indict\left\{\vect{w}_t^\top \vect{Z}\ge 0, \vect{w}_*^\top \vect{Z} \ge 0\right\}\right]\left(\vect{w}_t - \vect{w}_*\right) - \expect\left[\vect{w}^\top \vect{Z} \ge 0, \vect{w}_*^\top \vect{Z}\le 0\right]\vect{w}_t}_2^2 \\
	= & \norm{\expect\left[\vect{Z}\vect{Z}^\top \indict\left\{\vect{w}_t^\top \vect{Z}\ge 0, \vect{w}_*^\top \vect{Z} \ge 0\right\}\right]\left(\vect{w}_t - \vect{w}_*\right) }_2^2\\
	& - 2\left(\expect\left[\vect{Z}\vect{Z}^\top \indict\left\{\vect{w}_t^\top \vect{Z}\ge 0, \vect{w}_*^\top \vect{Z} \ge 0\right\}\right]\left(\vect{w}_t - \vect{w}_*\right) \right)^\top \expect\left[\vect{w}^\top \vect{Z} \ge 0, \vect{w}_*^\top \vect{Z}\le 0\right]\vect{w}_t \\
	&+ \norm{\expect\left[\vect{w}^\top \vect{Z} \ge 0, \vect{w}_*^\top \vect{Z}\le 0\right]\vect{w}_t}_2^2 \\
	\le  & L^2(\theta_t)\norm{\vect{w}_t-\vect{w}_*}_2^2 + 2L(\theta_t)\norm{\vect{w}_t-\vect{w}_*}_2\cdot 2\beta\frac{\norm{\vect{w}_t-\vect{w}_*}}{\norm{\vect{w}_*}_2} + \left(2\beta\frac{\norm{\vect{w}_t-\vect{w}_*}_2}{\norm{\vect{w}_*}_2} \right)^2\norm{\vect{w}_t}_2^2 \\
	\le & L^2(\theta_t)\norm{\vect{w}_t -\vect{w}_*}_2^2+ 2 L(\theta_t)\norm{\vect{w}_t-\vect{w}_*}\cdot2\beta\frac{\norm{\vect{w}_t-\vect{w}_*}}{\norm{\vect{w}_*}_2}\cdot 2\norm{\vect{w}_*}_2+ \left(2\beta\frac{\norm{\vect{w}_t-\vect{w}_*}_2}{\norm{\vect{w}_*}_2} \right)^2\cdot 4\norm{\vect{w}_*}_2^2 \\
	\le & \left(L^2(\theta_t) + 8L(\theta_t)\beta + 16\beta^2\right)\norm{\vect{w}-\vect{w}_*}_2^2.
	\end{align*}
	Therefore, in summary, \begin{align*}
	\norm{\vect{w}_{t+1}-\vect{w}_*}_2^2 \le &\left(1-\eta\gamma(\theta_t)+\eta^2\left(L(\theta_t)+4\beta\right)^2\right)\norm{\vect{w}_t -\vect{w}_*}_2^2 \\
	\le &  \left(1-\frac{\eta\gamma(\theta_t)}{2}\right)\norm{\vect{w}_t - \vect{w}_*}_2^2\\
	\le &  \left(1-\frac{\eta\gamma(\phi_t)}{2}\right)\norm{\vect{w}_t - \vect{w}_*}_2^2
	\end{align*}
	where the first inequality is by our assumption of the step size and second is because $\theta_t \le \phi_t$ and $\gamma(\cdot)$ is monotonically decreasing.
\end{proof}

\begin{thm}[Rotational Invariant Distribution]\label{thm:rot_invar_beta}
	For any unit norm rotational invariant input distribution, we have  $\beta = 1$.
\end{thm}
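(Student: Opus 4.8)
The plan is to compute $\beta$ directly from Assumption~\ref{asmp:lispchitz_growth}, namely to bound $\lambda_{\max}\left(\mat{A}_{\vect{w},-\vect{w}_*}\right) = \lambda_{\max}\left(\expect\left[\vect{Z}\vect{Z}^\top\indict\left\{\gele\right\}\right]\right)$ for $\vect{w}$ with $\theta\left(\vect{w},\vect{w}_*\right)=\phi \le \pi/2$ and show it is at most $\phi$. First I would use rotational invariance to reduce to a concrete parametrization: since the distribution of $\vect{Z}$ is invariant under orthogonal transformations and $\vect{Z}$ has unit norm, write $\vect{Z} = \vect{u}$ where $\vect{u}$ is uniform on the unit sphere $S^{p-1}$, and without loss of generality place $\vect{w}_*$ and $\vect{w}$ in a fixed two-dimensional coordinate plane. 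Then the event $\gele = \{\vect{w}^\top\vect{Z} \ge 0,\ \vect{w}_*^\top\vect{Z}\le 0\}$ is a "lune" — the intersection of two half-spaces whose bounding hyperplanes meet at angle $\phi$ — and by rotational symmetry $\mat{A}_{\vect{w},-\vect{w}_*} = \expect\left[\vect{u}\vect{u}^\top\indict\{\vect{u}\in \text{lune}\}\right]$.

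Next I would diagonalize: by symmetry of the lune about its bisecting plane, $\mat{A}_{\vect{w},-\vect{w}_*}$ is diagonal in a suitable orthonormal basis $\{\vect{e}_1,\vect{e}_2,\vect{e}_3,\dots\}$ where $\vect{e}_1,\vect{e}_2$ span the plane of $\vect{w},\vect{w}_*$, so its largest eigenvalue is one of $\expect\left[u_i^2\indict\{\vect{u}\in\text{lune}\}\right]$. The key bound is the crude but sufficient estimate
\begin{align*}
\lambda_{\max}\left(\mat{A}_{\vect{w},-\vect{w}_*}\right) \le \expect\left[\|\vect{u}\|_2^2\indict\{\vect{u}\in\text{lune}\}\right] = \prob\left[\vect{u}\in\text{lune}\right],
\end{align*}
using $\|\vect{u}\|_2 = 1$ and the fact that for a PSD matrix the top eigenvalue is bounded by the trace. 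Then I would compute $\prob\left[\vect{u}\in\text{lune}\right]$: the lune between two hyperplanes at dihedral angle $\pi-\phi$ (the angle between $\vect{w}$ and $-\vect{w}_*$ is $\pi-\phi$, but the relevant wedge for $\{\vect{w}^\top\vect{Z}\ge0,\vect{w}_*^\top\vect{Z}\le0\}$ subtends angle $\phi$) has probability exactly $\phi/(2\pi)$ by the standard fact that the measure of such a spherical wedge is proportional to its dihedral angle. Hence $\lambda_{\max}\left(\mat{A}_{\vect{w},-\vect{w}_*}\right)\le \phi/(2\pi) \le \phi$, giving $\beta = 1$ (indeed one could take $\beta = 1/(2\pi)$, but $\beta=1$ suffices and is what is claimed).

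The main obstacle, such as it is, is being careful about the geometry: correctly identifying that the event $\gele$ corresponds to a wedge of dihedral angle exactly $\phi$ (not $\pi-\phi$ or $2\phi$), and justifying that the spherical-measure-is-proportional-to-angle fact holds in all dimensions $p\ge 2$ — this follows by integrating out the $p-2$ coordinates orthogonal to the $\vect{w}$-$\vect{w}_*$ plane, since the conditional distribution on that plane is rotationally invariant and the wedge condition only constrains the projection onto that plane. A minor point is handling $p=1$ separately (where the "lune" is a point or empty, so the bound holds trivially) or simply assuming $p\ge 2$. Everything else — the trace bound, unit norm — is routine. I would also remark that this $\beta$ is dimension-free, in contrast with the Gaussian case where $\beta = p$, because conditioning a unit-norm rotationally invariant vector on an angular event does not inflate the second moment, whereas the Gaussian's radial part does.
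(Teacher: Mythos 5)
Your proof is correct and follows the same overall reduction as the paper (rotational invariance lets you restrict attention to the plane of $\vect{w}$ and $\vect{w}_*$, where $\gele$ becomes a wedge of angular width exactly $\phi$), but the final bound is obtained differently. The paper computes the $2\times 2$ second-moment matrix explicitly, $\int_{-\pi/2}^{-\pi/2+\phi}(\cos\theta,\sin\theta)^\top(\cos\theta,\sin\theta)\,\diff\theta$, and reads off its eigenvalues $\frac{\phi\pm\sin\phi}{2}$, concluding $\lambda_{\max}=\frac{\phi+\sin\phi}{2}\le\phi$. You instead bound $\lambda_{\max}\left(\mat{A}_{\vect{w},-\vect{w}_*}\right)$ by the trace, which for a unit-norm input equals $\prob\left[\vect{Z}\in\gele\right]$, the wedge probability. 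Your route is more elementary (no integral or eigenvalue computation) and makes the dimension-independence transparent; the paper's exact computation is sharper in the sense that it identifies the actual top eigenvalue, showing $\beta=1$ is essentially tight under the paper's (unnormalized) angular measure. One small point of divergence: with the properly normalized uniform density $\frac{1}{2\pi}\diff\theta$ your bound is $\phi/(2\pi)$, and even the paper's eigenvalues would pick up a $\frac{1}{2\pi}$ factor, so as you remark the constant $\beta=1$ is a safe but not optimal choice; this does not affect the theorem as stated. Your care about the wedge angle being $\phi$ rather than $\pi-\phi$, and the marginalization over the $p-2$ orthogonal coordinates, are exactly the points that need checking and you handle them correctly.
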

\begin{proof}[Proof of Theorem~\ref{thm:rot_invar_beta}]
	Without loss of generality, we only need to focus on the plane spanned by $\vect{w}$ and $\vect{w}_*$ and suppose $\vect{w}_* = (1,0)^\top$. 
	Then \begin{align*}
	\expect\left[\vect{Z}\vect{Z}^\top \indict\left\{\gele\right\}\right] = \int_{-\pi/2}^{-\pi/2+\phi} \begin{pmatrix}
	\cos \theta \\
	\sin \theta
	\end{pmatrix} (\cos \theta, \sin \theta) \diff \theta
	= \frac{1}{2}\begin{pmatrix}
	\phi - \sin\phi \cos \phi & -\sin^2\phi \\
	-\sin^2 \phi & \phi + \sin\phi \cos \phi
	\end{pmatrix} .
	\end{align*}
	It has two eigenvalues \begin{align*}
	\lambda_1(\phi) = \frac{\phi + \sin\phi }{2} \text{ and }
	\lambda_2(\phi) = \frac{\phi - \sin\phi}{2}.
	\end{align*}
	Therefore, $\max_{\vect{w},\theta\left(\vect{w},\vect{w}_*\right) \le \phi} \lambda_{\max}\left(\mat{A}_{\vect{w},-\vect{w}_*}\right) = \frac{\phi + \sin\phi }{2} \le \phi $ for $0\le \phi \le \pi$.
\end{proof}

\begin{thm}\label{thm:gaussian_beta}
If $\vect{Z}\sim N(0,\mat{I})$, then $\beta \le p$
\end{thm}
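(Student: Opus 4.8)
The plan is to bound $\lambda_{\max}\!\left(\mat{A}_{\vect{w},-\vect{w}_*}\right)$ by its trace and then compute the trace exactly using the independence structure of $N(\vect{0},\mat{I})$. Fix any $\vect{w}$ with $\phi' := \theta\left(\vect{w},\vect{w}_*\right) \le \phi \le \pi/2$. Since $\mat{A}_{\vect{w},-\vect{w}_*} = \expect\left[\vect{Z}\vect{Z}^\top \indict\{\gele\}\right] \succeq \mat{0}$, all its eigenvalues are nonnegative, so $\lambda_{\max}\!\left(\mat{A}_{\vect{w},-\vect{w}_*}\right) \le \tr\!\left(\mat{A}_{\vect{w},-\vect{w}_*}\right) = \expect\left[\norm{\vect{Z}}_2^2 \indict\{\gele\}\right]$. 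Hence it suffices to show that $\expect\left[\norm{\vect{Z}}_2^2 \indict\{\gele\}\right] \le p\phi$, and then take the maximum over admissible $\vect{w}$.

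The next step is to decompose $\vect{Z}$ along $V = \mathrm{span}\{\vect{w},\vect{w}_*\}$, writing $\vect{Z} = \vect{Z}_V + \vect{Z}_{V^\perp}$ for the two orthogonal projections. The event $\gele$ depends only on $\vect{Z}_V$, and for the standard Gaussian $\vect{Z}_V$ and $\vect{Z}_{V^\perp}$ are independent, with $\vect{Z}_{V^\perp}$ centered and $\expect\left[\norm{\vect{Z}_{V^\perp}}_2^2\right] = p - \dim V$. Therefore $\expect\left[\norm{\vect{Z}}_2^2 \indict\{\gele\}\right] = \expect\left[\norm{\vect{Z}_V}_2^2 \indict\{\gele\}\right] + (p - \dim V)\,\prob\left[\gele\right]$.

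For the two remaining quantities I would use that the two-dimensional standard Gaussian $\vect{Z}_V$ has radius $R := \norm{\vect{Z}_V}_2$ independent of its angle $\Theta$, with $\Theta$ uniform on $[0,2\pi)$ and $\expect[R^2] = 2$. Inside the plane $V$, the set $\gele = \{\vect{w}^\top\vect{Z}\ge 0\} \cap \{\vect{w}_*^\top\vect{Z}\le 0\}$ is the intersection of two half-planes whose inner normals, $\vect{w}$ and $-\vect{w}_*$, make angle $\pi - \phi'$, i.e. an angular wedge of opening $\phi'$. Hence $\prob\left[\gele\right] = \phi'/(2\pi)$ and $\expect\left[\norm{\vect{Z}_V}_2^2\indict\{\gele\}\right] = \expect[R^2]\,\prob\left[\Theta \in \text{wedge}\right] = \phi'/\pi$. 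Plugging in, with $\dim V \le 2$ (and the degenerate case $\dim V = 1$ forcing $\phi' = 0$, so $\gele$ has measure zero and the bound is trivial), $\expect\left[\norm{\vect{Z}}_2^2\indict\{\gele\}\right] \le \phi'/\pi + (p-2)\phi'/(2\pi) = p\phi'/(2\pi) \le p\phi$. Taking the maximum over all $\vect{w}$ with $\theta\left(\vect{w},\vect{w}_*\right) \le \phi$ gives $L_{-\vect{w}_*}(\phi) \le p\phi$, so $\beta = p$ is admissible (in fact $\beta = p/(2\pi)$ would do).

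Every step here is elementary. The only points needing care are the geometry, namely confirming that $\gele$ restricted to $\mathrm{span}\{\vect{w},\vect{w}_*\}$ is exactly a wedge of opening $\phi'$, and the observation that relaxing $\lambda_{\max}$ to $\tr$ is strong enough: it loses a factor $\Theta(p)$, which is precisely the gap between the sharp constant $1/(2\pi)$ and the claimed bound $p$. So I do not anticipate a genuine obstacle; the main work is organizing the reduction to the two-dimensional plane cleanly.
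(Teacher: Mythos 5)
Your proof is correct, and it takes a genuinely different (and more rigorous) route than the paper's. The paper's argument is a two-line sketch: it appeals to the angular computation of Theorem~\ref{thm:rot_invar_beta} and multiplies by the radial second moment $\expect\left[\norm{\vect{Z}}_2^2\right] = p$, leaving implicit how that two-dimensional eigenvalue calculation extends to the full $p$-dimensional Gaussian (the indicator $\indict\{\gele\}$ depends only on the projection of $\vect{Z}$ onto $\mathrm{span}\{\vect{w},\vect{w}_*\}$, but $\vect{Z}\vect{Z}^\top$ does not). You instead relax $\lambda_{\max}$ to the trace and exploit the independence of $\vect{Z}_V$ and $\vect{Z}_{V^\perp}$, which makes the computation exact and self-contained: the wedge has probability $\phi'/(2\pi)$, giving $\tr\left(\mat{A}_{\vect{w},-\vect{w}_*}\right) = p\phi'/(2\pi)$, so $\beta = p/(2\pi)$ suffices and a fortiori $\beta \le p$. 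What the paper's route buys is brevity and, in principle, a dimension-free bound on the top eigenvalue itself (the largest eigenvalue of $\mat{A}_{\vect{w},-\vect{w}_*}$ is in fact $O(\phi)$ with a constant independent of $p$, so your trace relaxation is loose by a factor of $\Theta(p)$ --- which, as you note, is harmless for the stated claim). What your route buys is a complete proof with explicit constants; your handling of the degenerate case $\dim V = 1$ is also correct, since the restriction $\phi \le \pi/2$ rules out the antipodal configuration.
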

\begin{proof}
Note in previous theorem we can integrate angle and radius separately then multiply them together.
For Gaussian distribution, we have $\expect\left[\norm{\vect{Z}}_2^2\right]\le p$.
The result follows.
\end{proof}

\subsection{Proofs of Theorems in Section~\ref{sec:multi_patches}}\label{sec:proof_multi_patch}

\begin{proof}[Proof of Theorem~\ref{thm:multi_patches_convergence_rate}]
	The proof is very similar to Theorem~\ref{thm:one_patch_convergence_rate}.
Notation-wise, for two events $S_1,S_2$ we use $S_1S_2$ as a shorthand for $S_1\cap S_2$ and $S_1 + S_2$ as a shorthand for $S_1 \cup S_2$.
Denote $\theta_t = \theta\left(\vect{w}_t,\vect{w}_*\right).$
First note with some routine algebra, we can write the gradient as \begin{align*}
&\nabla_{\vect{w}_t}\ell\left(\vect{w}_t\right) \\
= & \expect\left[\sum_{(i,j)=(1,1)}^{(d,d)}\vect{Z}_i\vect{Z}_j^\top \indict\left\{\gege_i\gege_j\right\}\right]\left(\vect{w} -\vect{w}_*\right) \\
& + \expect\left[\sum_{(i,j)=(1,1)}^{(d,d)}\vect{Z}_i\vect{Z}_j^\top \indict\left\{\gege_i\gele_j + \gele_i\gege_j\right\}\right]\vect{w}  \\
& + \expect\left[\sum_{(i,j)=(1,1)}^{(d,d)}\vect{Z}_i\vect{Z}_j^\top \indict\left\{\gele_i\gele_j\right\}\right]\vect{w} \\
& - \expect\left[\sum_{(i,j)=(1,1)}^{(d,d)}\vect{Z}_i\vect{Z}_j^\top \indict\left\{\gege_i\lege_j + \gele_i\gege_j+\gele_i\lege_j\right\}\right]\vect{w}_* 
\end{align*}

	We first examine the inner product between the gradient and $\vect{w}-\vect{w}_*$.
	\begin{align*}
	& \langle \nabla_{\vect{w}_t}\ell(\vect{w}), \vect{w}-\vect{w}_*\rangle\\
	 & =\left(\vect{w}-\vect{w}_*\right)^\top \expect\left[\sum_{(i,j)=(1,1)}^{(d,d)}\vect{Z}_i\vect{Z}_j\indict\left\{\gege_i\gege_j\right\}\right]\left(\vect{w}-\vect{w}_*\right) \\
	& + \left(\vect{w}-\vect{w}_*\right)^\top \expect\left[\sum_{(i,j)=(1,1)}^{(d,d)}\vect{Z}_i\vect{Z}_j\indict\left\{\gege_i\gele_j+\gele_i\gege_j+\gele_i\gele_j\right\}\right]\vect{w} \\
	& - \left(\vect{w}-\vect{w}_*\right)^\top \expect\left[\sum_{(i,j)=(1,1)}^{(d,d)}\vect{Z}_i\vect{Z}_j\indict\left\{\gege_i\lege_j+\gele_i\gege_j + \gele_i\lege_j\right\}\right]\vect{w}_* \\
& \ge \left(\vect{w}-\vect{w}_*\right)^\top \expect\left[\sum_{(i,j)=(1,1)}^{(d,d)}\vect{Z}_i\vect{Z}_j\indict\left\{\gege_i\gege_j\right\}\right]\left(\vect{w}-\vect{w}_*\right) \\
	& + \left(\vect{w}-\vect{w}_*\right)^\top \expect\left[\sum_{(i,j)=(1,1)}^{(d,d)}\vect{Z}_i\vect{Z}_j\indict\left\{\gege_i\gele_j\right\}\right]\vect{w} \\
	& - \left(\vect{w}-\vect{w}_*\right)^\top \expect\left[\sum_{(i,j)=(1,1)}^{(d,d)}\vect{Z}_i\vect{Z}_j\indict\left\{\gege_i\lege_j+\gele_i\gege_j+\gele_i\lege_j\right\}\right]\vect{w}_* \\
& \ge \gamma(\theta_t)\norm{\vect{w}-\vect{w}_*}_2^2  \\  
	&- \norm{\vect{w}-\vect{w}_*}_2\norm{\vect{w}}_2\norm{\expect\left[\sum_{(i,j)=(1,1)}^{(d,d)}\vect{Z}_i\vect{Z}_j\indict\left\{\gege_i\gele_j\right\}\right]}_{op} \\
	&-
	\norm{\vect{w}-\vect{w}_*}_2\norm{\vect{w}_*}_2\left(\norm{\expect\left[\sum_{(i,j)=(1,1)}^{(d,d)}\vect{Z}_i\vect{Z}_j\indict\left\{\gege_i\lege_j\right\}\right]}_{op}\right.\\
	&\left.+\norm{\expect\left[\sum_{(i,j)=(1,1)}^{(d,d)}\vect{Z}_i\vect{Z}_j\indict\left\{\gele_i\gege_j\right\}\right]}_{op}+
	\norm{\expect\left[\sum_{(i,j)=(1,1)}^{(d,d)}\vect{Z}_i\vect{Z}_j\indict\left\{\gele_i\lege_j\right\}\right]}_{op}\right) \\
& \ge \gamma(\theta_t)\norm{\vect{w}-\vect{w}_*}_2^2  \\  
	&- 2\norm{\vect{w}-\vect{w}_*}_2\norm{\vect{w}_*}_2\norm{\expect\left[\sum_{(i,j)=(1,1)}^{(d,d)}\vect{Z}_i\vect{Z}_j\indict\left\{\gege_i\gele_j\right\}\right]}_{op} \\
	&-
	\norm{\vect{w}-\vect{w}_*}_2\norm{\vect{w}_*}_2\left(\norm{\expect\left[\sum_{(i,j)=(1,1)}^{(d,d)}\vect{Z}_i\vect{Z}_j\indict\left\{\gege_i\lege_j\right\}\right]}_{op}\right.\\
	&\left.+\norm{\expect\left[\sum_{(i,j)=(1,1)}^{(d,d)}\vect{Z}_i\vect{Z}_j\indict\left\{\gele_i\gege_j\right\}\right]}_{op}
	+ \norm{\expect\left[\sum_{(i,j)=(1,1)}^{(d,d)}\vect{Z}_i\vect{Z}_j\indict\left\{\gele_i\lege_j\right\}\right]}_{op}\right) \\
	& \ge \gamma\left(\theta_t\right)\norm{\vect{w}_t-\vect{w}_*}_2^2 -3L_\cross\phi_t \norm{\vect{w}_*}_2\norm{\vect{w}_t-\vect{w}_*}_2 \\
	 & \ge \gamma\left(\theta_t\right)\norm{\vect{w}_t-\vect{w}_*}_2^2 - 6L_\cross \frac{\norm{\vect{w}_t-\vect{w}_*}_2}{\norm{\vect{w}_*}_2}\cdot\norm{\vect{w}_*}_2\norm{\vect{w}_t-\vect{w}_*}_2 \\
	& \ge \left(\gamma\left(\theta_t\right)-6L_\cross\right)\norm{\vect{w}_t-\vect{w}_*}_2^2 
	\end{align*}
	where the first inequality we used the definitions of the regions; the second inequality we used the definition of operator norm; the third inequality we used the fact $\norm{\vect{w}_t-\vect{w}_*}_2 \le \norm{\vect{w}_*}_2$; the fourth inequality we used the definition of $L_\cross$ and the fifth inequality we used $\phi \le 2\sin\phi$ for any $0 \le \phi \le \pi/2$. 
Next we can upper bound the norm of the gradient using similar argument \begin{align*}
	\norm{\nabla_{\vect{w}_t}\ell(\vect{w}_t)}_2 \le &
	L\left(\theta_t\right)\norm{\vect{w}_t-\vect{w}_*}_2  + 10L_{\cross}\norm{\vect{w}_t-\vect{w}_*} + 2\beta\norm{\vect{w}_t -\vect{w}_*}_2 \\
	= & (L(\theta_t)+10L_\cross + 4\beta)\norm{\vect{w}_t-\vect{w}_*}_2.
\end{align*}

Therefore, using the dynamics of gradient descent, putting the above two bounds together, we have\begin{align*}
	\norm{\vect{w}_{t+1}-\vect{w}_*}_2^2 \le &\left(
	1-\eta \left(\gamma(\theta_t)-6L_\cross\right) + 
	\eta^2(L(\theta_t)+10L_\cross + 4\beta)^2\right)\norm{\vect{w}_t -\vect{w}_*}_2^2 \\
\le &\left(1-\frac{\eta(\gamma(\theta_t)-6L_\cross)}{2}\right)\norm{\vect{w}_t - \vect{w}_*}_2^2\\
	\le &\left(1-\frac{\eta(\gamma(\phi_t)-6L_\cross)}{2}\right)\norm{\vect{w}_t - \vect{w}_*}_2^2
	\end{align*}
	where the last step we have used our choice of $\eta_t$ and $\theta_t \le \phi_t$.
\end{proof}

The proof of Theorem~\ref{thm:multi_patch_rate_sgd} consists of two parts.
First we show if $\eta$ is chosen properly and $T$ is not to big, then for all $1 \le t \le T$, with high probability the iterates stat in a neighborhood of $\vect{w}_*$.
Next, conditioning on this, we derive the rate.	

\begin{lem}\label{lem:one_patch_not_escape}
	Denote $r_0 = \norm{\vect{w}_0-\vect{w}_*}_2 < \norm{\vect{w}_*}_2\sin\phi_*$.
	Given $0  < r_1 < \norm{\vect{w}_*}_2\sin\phi_*$, number of iterations $T \in \mathbb{Z}_{++}$ and failure probability $\delta$, denote $\phi_1 =\arcsin\left(\frac{r_1}{\norm{\vect{w}_*}_2}\right)$ then if the step size satisfies\begin{align*}
	0 <  1-\eta \gamma(\phi_1) +\eta^2(L(0)+10L_\cross+4\beta)^2 &< 1 \\
	\frac{\left(r_1^2-r_0^2\right)^2}{T\left(1+2\eta\alpha T\right)\left(2\eta B\left(L(0)+10L_\cross+4\beta\right)r_1+\eta^2B^2\right)^2} &\ge \log\left(\frac{T}{\delta}\right) 
	\end{align*} with $\alpha = \gamma\left(\phi_1\right)-\eta\left(L(0)+10L_\cross+4\beta\right)$.
	Then with probability at least $1-\delta$, for all $t =1,\ldots,T$, we have \begin{align*}
	\norm{\vect{w}_t - \vect{w}_*} \le r_1.
	\end{align*}
\end{lem}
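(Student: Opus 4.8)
The plan is to prove Lemma~\ref{lem:one_patch_not_escape} by a martingale / stopping-time argument, mimicking the escape-probability analysis of~\citet{ge2015escaping}. First I would set up the natural filtration $\{\mathcal{F}_t\}$ generated by the stochastic gradients and define the stopping time $\tau = \min\{t : \norm{\vect{w}_t - \vect{w}_*}_2 > r_1\}$; the goal is to show $\prob[\tau \le T] \le \delta$. For each $t < \tau$ we have $\theta(\vect{w}_t,\vect{w}_*) \le \phi_1 \le \phi_*$, so the deterministic bounds proved inside Theorem~\ref{thm:multi_patches_convergence_rate} apply: the population gradient satisfies $\langle \expect[g(\vect{w}_t)\mid \mathcal{F}_t], \vect{w}_t - \vect{w}_* \rangle \ge (\gamma(\phi_t) - 6L_\cross)\norm{\vect{w}_t-\vect{w}_*}_2^2$ and $\norm{\expect[g(\vect{w}_t)\mid\mathcal{F}_t]}_2 \le (L(\phi_t) + 10L_\cross + 4\beta)\norm{\vect{w}_t-\vect{w}_*}_2$, while $\norm{g(\vect{w}_t)}_2 \le B$ by the global boundedness assumption. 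Expanding $\norm{\vect{w}_{t+1}-\vect{w}_*}_2^2 = \norm{\vect{w}_t-\vect{w}_*}_2^2 - 2\eta\langle g(\vect{w}_t), \vect{w}_t-\vect{w}_*\rangle + \eta^2\norm{g(\vect{w}_t)}_2^2$ and taking conditional expectation gives a supermartingale-type recursion showing the expected squared distance contracts (using $1 - \eta\gamma(\phi_1) + \eta^2(L(0)+10L_\cross+4\beta)^2 < 1$).

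The core step is to turn this into a high-probability statement. I would define $G_t = \norm{\vect{w}_{t\wedge\tau} - \vect{w}_*}_2^2$ and write $G_t = G_0 + M_t + D_t$ where $M_t$ is the martingale part coming from $-2\eta\langle g(\vect{w}_s) - \expect[g(\vect{w}_s)\mid\mathcal{F}_s], \vect{w}_s - \vect{w}_*\rangle$ and $D_t$ is the predictable drift, which is non-positive on $\{s < \tau\}$ by the contraction above (so $D_t \le 0$). Hence $G_t \le G_0 + M_t$, and bounding $\prob[\tau \le T]$ reduces to bounding $\prob[\max_{t\le T} M_t \ge r_1^2 - r_0^2]$. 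The martingale increments are bounded: on $\{s<\tau\}$, $|M_{s+1}-M_s| \le 2\eta \norm{g(\vect{w}_s)-\expect[g(\vect{w}_s)\mid\mathcal F_s]}_2 \norm{\vect{w}_s-\vect{w}_*}_2 \le$ roughly $2\eta B r_1$, but to get the exact constant in the lemma I need to be careful and actually bound the full increment $G_{s+1}-G_s$ minus its conditional mean, which carries both the $-2\eta\langle\cdot\rangle$ and the $\eta^2\norm{g}^2$ pieces; this yields an increment bound of order $2\eta B(L(0)+10L_\cross+4\beta)r_1 + \eta^2 B^2$, matching the denominator in the hypothesis. Then Azuma--Hoeffding gives $\prob[\max_{t\le T}M_t \ge r_1^2-r_0^2] \le T\exp\left(-\frac{(r_1^2-r_0^2)^2}{2T c^2}\right)$ (with a union bound over $t$, or via the maximal version directly), and the hypothesis $\frac{(r_1^2-r_0^2)^2}{T(1+2\eta\alpha T)(\cdots)^2} \ge \log(T/\delta)$ is exactly what is needed to make this $\le\delta$ — the extra factor $(1+2\eta\alpha T)$ presumably accounts for a refined variance bound (Freedman-type) using the fact that the conditional variances themselves shrink geometrically, so I would use a variance-adapted concentration inequality rather than the crude Azuma bound, summing the variance proxies $\sum_s \expect[(\Delta M_s)^2\mid\mathcal F_s]$ which telescopes against the contraction factor $\alpha$.

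The main obstacle I anticipate is matching the precise form of the hypothesis, in particular the $(1+2\eta\alpha T)$ factor: a naive Azuma bound treats every increment as worst-case size $c = 2\eta B(L(0)+10L_\cross+4\beta)r_1 + \eta^2B^2$ and gives $T c^2$ in the exponent's denominator, not $T(1+2\eta\alpha T)(\cdots)$, so getting the stated (slightly weaker, hence easier-to-satisfy) condition requires tracking how the per-step squared distance $\norm{\vect{w}_s-\vect{w}_*}_2^2 \le r_1^2$ overcounts — the true conditional second moment at step $s$ is closer to $(1-\eta\alpha)^s r_0^2 + \text{noise accumulation}$, and summing the noise geometric series introduces the $1/(\eta\alpha)$-type or $(1+2\eta\alpha T)$-type correction. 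I would handle this by carefully writing the recursion for $\expect[\norm{\vect{w}_{s+1\wedge\tau}-\vect{w}_*}_2^2]$, solving it to get a bound like $\expect[G_s] \le r_0^2 + (1+2\eta\alpha T)\cdot s\cdot(\text{noise})^2 / (\text{something})$, and feeding that into a Freedman inequality. Everything else — the gradient inner-product and norm bounds, the supermartingale property, the algebra of the squared-distance expansion — is routine given Theorem~\ref{thm:multi_patches_convergence_rate}, so the write-up would spend most of its effort on the concentration bookkeeping and the verification that the two displayed hypotheses are precisely the conditions that close the argument.
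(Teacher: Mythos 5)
Your overall strategy is the same as the paper's: condition on the iterate staying in the ball, derive the one-step contraction $\expect\bigl[\norm{\vect{w}_{t+1}-\vect{w}_*}_2^2\mid\mathcal{F}_t\bigr]\le(1-\eta\alpha)\norm{\vect{w}_t-\vect{w}_*}_2^2+\eta^2B^2$ from the gradient bounds of Theorem~\ref{thm:multi_patches_convergence_rate}, build a supermartingale, and apply Azuma--Hoeffding with a union bound over the escape time. However, two specific steps in your plan do not go through as written. First, your claim that the predictable drift $D_t$ is non-positive on $\{s<\tau\}$ is false: the per-step drift is at most $-\eta\alpha\norm{\vect{w}_s-\vect{w}_*}_2^2+\eta^2B^2$, which is positive whenever $\norm{\vect{w}_s-\vect{w}_*}_2^2<\eta B^2/\alpha$, so $G_t=\norm{\vect{w}_{t\wedge\tau}-\vect{w}_*}_2^2$ is not a supermartingale plus non-positive drift. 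The paper repairs exactly this by centering and rescaling, defining $G_t=(1-\eta\alpha)^{-t}\bigl(\norm{\vect{w}_t-\vect{w}_*}_2^2-\eta B^2/\alpha\bigr)$, which is an exact supermartingale when restricted to $\mathcal{C}_{t-1}$; you would need this (or an equivalent device absorbing the $\eta^2B^2$ accumulation) to close the argument.

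Second, you misdiagnose the $(1+2\eta\alpha T)$ factor. It is not a Freedman-type variance refinement and it does not make the hypothesis easier to satisfy (it sits in the denominator, so it makes the step-size condition \emph{more} restrictive). It is simply the price of the rescaling above: the increment bound for the rescaled supermartingale is $d_t=(1-\eta\alpha)^{-t}\bigl(2\eta B(L(0)+10L_\cross+4\beta)r_1+\eta^2B^2\bigr)$, and summing $d_\tau^2$ over $\tau\le T$ gives $c_T^2\le T(1-\eta\alpha)^{-2T}(\cdots)^2\le T(1+2\eta\alpha T)(\cdots)^2$ under the step-size assumption; plain Azuma--Hoeffding with this $c_T^2$ plus a union bound over $t\le T$ then yields exactly the stated hypothesis. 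So no variance-adapted concentration inequality is needed. With these two corrections your outline matches the paper's proof; as written, the supermartingale step is the genuine gap.
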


\begin{proof}[Proof of Lemma~\ref{lem:one_patch_not_escape}]
Let $g(\vect{w}_t) = \expect\left[\nabla_{\vect{w}_t}\ell\left(\vect{w}_t\right)\right] + \xi_t$.
We denote $\mathcal{F}_t =\sigma\left\{\xi_1,\ldots,\xi_t\right\}$, the sigma-algebra generated by $\xi_1,\ldots,\xi_t$ and define the event \[
	\mathcal{C}_t = \left\{\forall \tau \le t, \norm{\vect{w}_\tau-\vect{w}_*} \le r_1\right\}.
	\]
	Consider \begin{align*}
	&\expect\left[\norm{\vect{w}_{t+1}-\vect{w}_*}_2^2\indict_{\mathcal{C}_t}|\mathcal{F}_t\right]\\
	= & \expect\left[\norm{\vect{w}_t-\eta\nabla_{\vect{w}_t}\ell(\vect{w}_t)-\vect{w}_*-\eta\vect{\xi}_t}_2^2\indict_{\mathcal{C}_t}|\mathcal{F}_t\right] \\
	\le & \left(\left(1-\eta\gamma(\phi_1)+\eta^2\left(L(0)+10L_\cross+4\beta\right)^2\right)\norm{\vect{w}_t-\vect{w}_*}_2^2+\eta^2B^2\right)\indict_{\mathcal{C}_t}
	\end{align*}
	where the inequality follows by our analysis of gradient descent together with definition of $\mathcal{C}_t$ and $\expect\left[\vect{\xi}_t | \mathcal{F}_t\right]=0$.
	Define \[G_t = \left(1-\eta\alpha\right)^{-t}\left(\norm{\vect{w}_t-\vect{w}_*}_2^2 -\frac{\eta B^2}{\alpha}\right).\]
	By our analysis above, we have \begin{align*}
	\expect\left[G_{t+1}\indict_{\mathcal{C}_t}|\mathcal{F}_t\right] 
	\le  G_{t}\indict_{\mathcal{C}_t} \le G_t\indict_{\mathcal{C}_{t-1}}
	\end{align*}
	where the last inequality is because $\mathcal{C}_t$ is a subset of $\mathcal{C}_{t-1}$.
	Therefore, $G_t\indict_{\mathcal{C}_{t-1}}$ is a super-martingale and we may apply Azuma-Hoeffding inequality.
	Before that, we need to bound the difference between $G_t\indict_{\mathcal{C}_t}$ and its expectation.
	Note \begin{align*}
	\abs{G_t\indict_{\mathcal{C}_{t-1}}-\expect\left[G_t\indict_{\mathcal{C}_{t-1}}\right]|\mathcal{F}_{t-1}} 
	= & \left(1-\eta\alpha\right)^{-t}\abs{\norm{\vect{w}_t-\vect{w}_*}_2^2 
		-\expect\left[\norm{\vect{w}_t-\vect{w}_*}_2^2\right]|\mathcal{F}_{t-1}}\indict_{\mathcal{C}_{t-1}} \\
	= & \left(1-\eta\alpha\right)^{-t}\abs{
		2\eta\langle\xi_t,\vect{w}_t-\eta\nabla_{\vect{w}_t}\ell(\vect{w}_t)-\vect{w}_* - \eta^2 \expect \left[\norm{\vect{\xi}_t}_2^2|\mathcal{F}_{t-1}\right]
	} \indict_{\mathcal{C}_{t-1}} \\
	\le & \left(1-\eta\alpha\right)^{-t}\left(2\eta B\left(L(0)+10L_\cross+4\beta\right)\norm{\vect{w}_t-\vect{w}_*}_2+\eta^2B^2\right)\indict_{\mathcal{C}_{t-1}} \\
	\le & \left(1-\eta\alpha\right)^{-t}\left(2\eta B\left(L(0)+10L_\cross+4\beta\right)r_1+\eta^2B^2\right)\\
	\triangleq & d_t.
	\end{align*}
	Therefore for all $t\le T$\begin{align*}
	c_t^2 \triangleq &\sum_{\tau=1}^{t}d_{\tau}^2 \\
	=&\sum_{\tau=1}^{t}\left(1-\eta\alpha\right)^{-2t}\left(2\eta B\left(L(0)+10L_\cross+4\beta\right)r_1+\eta^2B^2\right)^2 \\
	\le & t\left(1-\eta\alpha\right)^{-2t}\left(2\eta B\left(L(0)+10L_\cross+4\beta\right)r_1+\eta^2B^2\right)^2 \\
	\le & T\left(1+2\eta\alpha T\right)\left(2\eta B\left(L(0)+10L_\cross+4\beta\right)r_1+\eta^2B^2\right)^2
	\end{align*} where the first inequality we used $1-\eta\alpha < 1$, the second we used $t \le T$ and the third we used our assumption on $\eta$.
	Let us bound at $(t+1)$-th step, the iterate goes out of the region, \begin{align*}
	\prob\left[\mathcal{C}_{t}\cap \left\{\norm{\vect{w}_{t+1}-\vect{w}_*}_2 > r_1\right\}\right] 
	= & \prob\left[\mathcal{C}_{t}\cap \left\{\norm{\vect{w}_{t+1}-\vect{w}_*}_2^2 > r_1^2\right\}\right]  \\
	= & \prob\left[\mathcal{C}_{t}\cap \left\{\norm{\vect{w}_{t+1}-\vect{w}_*}_2^2 > r_0^2 + (r_1^2-r_0^2)\right\}\right] \\
	= & \prob\left[
	\mathcal{C}_t \cap \left\{
	G_{t+1}\left(1-\eta\alpha\right)^t+\frac{\eta B^2}{\alpha} \ge G_0 + \frac{\eta B^2}{\alpha} + r_1^2-r_0^2
	\right\}
	\right] \\
	\le &  \prob\left[\mathcal{C}_t \cap \left\{G_{t+1}-G_0 \ge r_1^2-r_0^2\right\}\right] \\
	\le &\exp\left\{
	-\frac{\left(r_1^2-r_0^2\right)^2}{2c_t^2}
	\right\} \\
	\le &\frac{\delta}{T}
	\end{align*} 
	where the second inequality we used Azuma-Hoeffding inequality, the last one we used our assumption of $\eta$. 
	Therefore for all $0 \le t \le T$, we have with probability at least $1-\delta$, $\mathcal{C}_t$ happens.
\end{proof}

Now we can derive the rate.
\begin{lem}\label{lem:one_patch_sgd_rate}
	Denote $r_0 = \norm{\vect{w}_0-\vect{w}_*}_2 < \norm{\vect{w}_*}_2\sin\phi_*$.
	Given $0  < r_1 < \norm{\vect{w}_*}_2\sin\phi_*$, number of iterations $T \in \mathbb{Z}_{++}$ and failure probability $\delta$, denote $\phi_1 =\arcsin\left(\frac{r_1}{\norm{\vect{w}_*}_2}\right)$ then if the step size satisfies\begin{align*}
	0 <  1-\eta \gamma(\phi_1) +\eta^2(L(0)+10L_\cross+4\beta)^2 &< 1 \\
	\frac{\left(r_1^2-r_0^2\right)^2}{T\left(1+2\eta\alpha T\right)\left(2\eta B\left(L(0)+10L_\cross+4\beta\right)r_1+\eta^2B^2\right)^2} &\ge \log\left(\frac{T}{\delta}\right) \\
	\eta T \left(\gamma(\phi_1)-\eta\left(L(0)+10L_\cross+4\beta\right)^2\right) & \ge \log\left(\frac{r_0^2}{\epsilon^2\norm{\vect{w}_*}_2^2\delta}\right) \\
	\epsilon^2 \left(\gamma(\phi_1)-\eta\left(L(0)+10L_\cross+4\beta\right)^2\right)\norm{\vect{w}_*}_2^2 &\ge \eta B^2
	\end{align*} with $\alpha = \gamma\left(\phi_1\right)-\eta\left(L(0)+10L_\cross+4\beta\right)$,
	then we have with probability $1-2\delta$, \begin{align*}
	\norm{\vect{w}_t - \vect{w}_*}_2 \le 2\epsilon\norm{\vect{w}_*}_2.
	\end{align*}
\end{lem}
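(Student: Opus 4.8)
The plan is to combine Lemma~\ref{lem:one_patch_not_escape} (which confines the iterates to the ball of radius $r_1$) with a one-step contraction estimate in expectation, and then convert the expectation bound into a high-probability bound via Markov's inequality. I will choose the free parameters $r_1$ and $\eta$ so that all four displayed conditions in Lemma~\ref{lem:one_patch_sgd_rate} hold; the statement of Theorem~\ref{thm:multi_patch_rate_sgd} corresponds to the special choice $\phi_1 = (\phi_*+\phi_0)/2$, $r_1 = \norm{\vect{w}_*}_2\sin\phi_1$, and the specified $\eta_t = \Theta(\epsilon^2(\gamma(\phi_1)-6L_\cross)^2\norm{\vect{w}_*}_2^2/B^2)$, with $T$ chosen to make $\eta T$ large enough.

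First I would condition on the event $\mathcal{C}_T$ from Lemma~\ref{lem:one_patch_not_escape}, which holds with probability at least $1-\delta$ under the first two displayed hypotheses. On this event, $\theta_t = \theta(\vect{w}_t,\vect{w}_*) \le \phi_1$ for all $t \le T$, so the deterministic per-step analysis from the proof of Theorem~\ref{thm:multi_patches_convergence_rate} applies with $\gamma(\phi_1)$, $L(0)$ in place of $\gamma(\theta_t)$, $L(\theta_t)$. Writing $g(\vect{w}_t) = \nabla_{\vect{w}_t}\ell(\vect{w}_t) + \vect{\xi}_t$ with $\expect[\vect{\xi}_t|\mathcal{F}_t]=0$ and $\norm{g(\vect{w}_t)}_2 \le B$, expanding $\norm{\vect{w}_{t+1}-\vect{w}_*}_2^2$ and taking conditional expectation kills the cross term with $\vect{\xi}_t$, leaving
\begin{align*}
\expect\left[\norm{\vect{w}_{t+1}-\vect{w}_*}_2^2 \indict_{\mathcal{C}_t}\,\middle|\,\mathcal{F}_t\right] \le \left(1-\eta\left(\gamma(\phi_1)-6L_\cross\right)+\eta^2(L(0)+10L_\cross+4\beta)^2\right)\norm{\vect{w}_t-\vect{w}_*}_2^2\indict_{\mathcal{C}_t} + \eta^2 B^2.
\end{align*}
Denote the contraction factor by $1-\eta\alpha$ with $\alpha = \gamma(\phi_1)-6L_\cross-\eta(L(0)+10L_\cross+4\beta)^2$ (positive by the fourth hypothesis / smallness of $\eta$). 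Unrolling the recursion over $t = 0,\dots,T$ and using the geometric series for the additive $\eta^2 B^2$ terms gives
\begin{align*}
\expect\left[\norm{\vect{w}_T-\vect{w}_*}_2^2\indict_{\mathcal{C}_{T-1}}\right] \le (1-\eta\alpha)^T r_0^2 + \frac{\eta B^2}{\alpha} \le e^{-\eta\alpha T} r_0^2 + \frac{\eta B^2}{\alpha}.
\end{align*}
The third displayed hypothesis, $\eta T\alpha \ge \log(r_0^2/(\epsilon^2\norm{\vect{w}_*}_2^2\delta))$, forces the first term to be at most $\epsilon^2\norm{\vect{w}_*}_2^2\delta$, and the fourth hypothesis, $\epsilon^2\alpha\norm{\vect{w}_*}_2^2 \ge \eta B^2$, forces the second term to be at most $\epsilon^2\norm{\vect{w}_*}_2^2$; so the expectation is at most $2\epsilon^2\norm{\vect{w}_*}_2^2$ (up to the $\delta$-weighting, which I would handle by keeping the $\delta$ inside). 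Applying Markov's inequality, $\prob[\norm{\vect{w}_T-\vect{w}_*}_2^2\indict_{\mathcal{C}_{T-1}} > 4\epsilon^2\norm{\vect{w}_*}_2^2] \le \delta$ after the bookkeeping, and intersecting with $\mathcal{C}_{T-1}$ (probability $\ge 1-\delta$) yields $\norm{\vect{w}_T-\vect{w}_*}_2 \le 2\epsilon\norm{\vect{w}_*}_2$ with probability $\ge 1-2\delta$. Rescaling $\epsilon$ and $\delta$ by constants recovers the exact form stated in Theorem~\ref{thm:multi_patch_rate_sgd}.

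The main obstacle is not any single inequality but the simultaneous bookkeeping: one must verify that the choice $\eta = \Theta(\epsilon^2(\gamma(\phi_1)-6L_\cross)^2\norm{\vect{w}_*}_2^2/B^2)$ and $T = O(B^2/(\epsilon^2(\gamma(\phi_1)-6L_\cross)^2\norm{\vect{w}_*}_2^2)\cdot\log(\cdots))$ really do satisfy \emph{all four} hypotheses of Lemma~\ref{lem:one_patch_not_escape} and Lemma~\ref{lem:one_patch_sgd_rate} at once — in particular that $\eta$ is small enough to keep $\alpha = \Theta(\gamma(\phi_1)-6L_\cross)$ (so that the contraction factor is bounded away from $1$ and $\eta/\alpha \le \epsilon^2\norm{\vect{w}_*}_2^2/B^2$), while $\eta T$ is still large enough for the exponential term to shrink to $\epsilon$-scale, and that the Azuma-type condition relating $(r_1^2-r_0^2)^2$ to $T(1+2\eta\alpha T)(\cdots)^2$ is met. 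This is where the hypothesis $\phi_0$ bounded away from $\phi_*$ is used: it guarantees $r_1^2 - r_0^2 = \Theta(\norm{\vect{w}_*}_2^2)$ is a genuine constant gap, giving enough slack in the confinement argument for "$\epsilon$ sufficiently small." I would present this as a short sequence of "it suffices to check" reductions rather than grinding the constants explicitly.
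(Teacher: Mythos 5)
Your proposal is correct and follows essentially the same route as the paper: reuse the one-step conditional-expectation contraction from the analysis of Lemma~\ref{lem:one_patch_not_escape}, unroll it to get $\expect[\norm{\vect{w}_T-\vect{w}_*}_2^2\indict_{\mathcal{C}_{T-1}}] \le (1-\eta\alpha)^T r_0^2 + \eta B^2/\alpha$, kill the two terms with the third and fourth step-size hypotheses, and finish with Markov's inequality union-bounded against $\mathcal{C}_T^c$. The only cosmetic difference is your (arguably more faithful) bookkeeping of the $-6L_\cross$ term inside $\alpha$, which the paper's statement of the lemma drops.
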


\begin{proof}[Proof of Lemma~\ref{lem:one_patch_sgd_rate}]
	We use the same notations in the proof of Lemma~\ref{lem:one_patch_not_escape}.
	By the analysis of Lemma~\ref{lem:one_patch_not_escape}, we know \begin{align*}
	\expect\left[\norm{\vect{w}_{t+1}-\vect{w}_*}_2^2\indict_{\mathcal{C}_t}|\mathcal{F}_t\right] \le\left(\left(1-\eta\alpha\right)\norm{\vect{w}_t-\vect{w}_*}_2^2 + \eta^2B^2\right)\indict_{\mathcal{C}_t}.
	\end{align*}
	Therefore we have \begin{align*}
	\expect\left[\norm{\vect{w}_t-\vect{w}_*}_2^2\indict_{\mathcal{C}_t} - \frac{\eta B^2}{\alpha}\right] \le \left(1-\eta\alpha\right)^t \left(\norm{\vect{w}_0-\vect{w}_*}_2^2-\frac{\eta B}{\alpha}\right).
	\end{align*}
	Now we can bound the failure probability \begin{align*}
	\prob\left[\norm{\vect{w}_T-\vect{w}_*}_2 \ge 2\epsilon\norm{\vect{w}_*}_2\right] \le 
	& \prob\left[\norm{\vect{w}_T - \vect{w}_*}_2^2 -\frac{\eta B^2}{\alpha} \ge \epsilon^2\norm{\vect{w}_*}_2^2\right] \\
	\le & \prob\left[
	\left\{\norm{\vect{w}_T - \vect{w}_*}_2^2\indict_{\mathcal{C}_t} -\frac{\eta B^2}{\alpha} \ge \epsilon^2\norm{\vect{w}}_2^2 \right\} \cup \mathcal{C}_t^c
	\right] \\
	\le &  \prob\left[
	\left\{\norm{\vect{w}_T - \vect{w}_*}_2^2\indict_{\mathcal{C}_t} -\frac{\eta B^2}{\alpha} \ge \epsilon^2\norm{\vect{w}}_2^2 \right\} 
	\right] + \delta \\
	\le & \frac{\expect\left[
		\norm{\vect{w}_T-\vect{w}_*}_2^2\indict_{\mathcal{C}_t} -\frac{\eta B^2}{\alpha}
		\right]}{\epsilon^2\norm{\vect{w}_*}_2^2}	+ \delta \\
	\le &\frac{\left(1-\eta\alpha\right)^t \left(\norm{\vect{w}_0-\vect{w}_*}_2^2-\frac{\eta B}{\alpha}\right)}{\epsilon^2\norm{\vect{w}_*}_2^2} + \delta \\
	\le &2\delta.
	\end{align*}
	The first inequality we used the last assumption.
	The second inequality we used the probability of an event is upper bound by any superset of this event.
	The third one we used Lemma~\ref{lem:one_patch_not_escape} and the union bound.
	The fourth one we used Markov's inequality.
\end{proof}

Now we can specify the $T$ and $\eta$ and derive the convergence rate of SGD for learning a convolution filter.

\begin{proof}[Proof of Theorem~\ref{thm:multi_patch_rate_sgd}]
	With the choice of $\eta$ and $T$, it is straightforward to check they satisfies conditions in Lemma~\ref{lem:one_patch_sgd_rate}.
\end{proof}

\begin{proof}[Proof of Theorem~\ref{thm:small_margin_bounded_angle}]
	We first prove the lower bound of $\gamma\left(\phi_0\right)$.
\begin{align*}
&\expect\left[\left(\sum_{i=1}^k\vect{Z}_i\indict\left\{\gege_i\right\}\right)\left(\sum_{i=1}^{k}\vect{Z}_i\indict\left\{\gege_i\right\}\right)^\top\right] \\
= & \expect\left[k\vect{Z}+\sum_{i=1}^{k}\left(\vect{Z}_i\indict\left\{\gege_i\right\}-\vect{Z}\right)\left(k\vect{Z}+\sum_{i=1}^{k}\left(\vect{Z}_i\indict\left\{\gege_i\right\}-\vect{Z}\right)\right)^\top\right] \\
=& k^2\expect\left[\vect{Z}\vect{Z}^\top\right] + k\expect\left[\vect{Z}\left(\sum_{i=1}^{k}\left(\vect{Z}_i\indict\left\{\gege_i\right\}-\vect{Z}\right)\right)^\top\right] \\
& + k\expect\left[\left(\sum_{i=1}^{k}\left(\vect{Z}_i\indict\left\{\gege_i\right\}-\vect{Z}\right)\right)\vect{Z}^\top\right]\\
& + \expect\left[\left(\sum_{i=1}^{k}\left(\vect{Z}_i\indict\left\{\gege_i\right\}-\vect{Z}_1\indict\left\{\gege_1\right\}\right)\right)\left(\sum_{i=1}^{k}\left(\vect{Z}_i\indict\left\{\gege_i\right\}-\vect{Z}_1\indict\left\{\gege_1\right\}\right)\right)^\top\right] \\
\succcurlyeq& k^2\expect\left[\vect{Z}\vect{Z}^\top\right] + k\expect\left[\vect{Z}\left(\sum_{i=1}^{k}\left(\vect{Z}_i\indict\left\{\gege_i\right\}-\vect{Z}\right)\right)^\top\right] \\
& + k\expect\left[\left(\sum_{i=1}^{k}\left(\vect{Z}_i\indict\left\{\gege_i\right\}-\vect{Z}\right)\right)\vect{Z}^\top\right]\\
\end{align*}
Note because $\vect{Z}_i$s have unit norm and by law of cosines $\norm{\vect{Z}\left(\vect{Z}_i\indict\left\{\gege_i\right\}-\vect{Z}\right)}_{op}\le 2(1-\cos\rho)$.
Therefore, \begin{align*}
\sigma_{\min}\left(
\expect\left[\left(\sum_{i=1}^d\vect{Z}_i\indict\left\{\gege_i\right\}\right)\left(\sum_{i=1}^{d}\vect{Z}_i\indict\left\{\gege_i\right\}\right)^\top\right] 
\right) \ge k^2 (\gamma_1(\phi_0) - 4(1-\cos\rho)).
\end{align*}

Now we prove the upper bound of $L_\cross$.
Notice that\begin{align*}
	\norm{\expect\left[\vect{Z}_i\vect{Z}_j^\top\indict\left\{\gege_i \gele_j\right\}\right] }_2
	\le & \expect\left[\norm{\vect{Z}_i}_2\norm{\vect{Z}_j}_2\indict\left\{\gege_i \gele_j\right\}\right] \\
	= &\int_{\gele_j}\left(\int_{\gege_i}\diff \prob\left(\vect{Z}_i|\vect{Z}_j\right)\right)\diff \prob\left(\theta_j\right).
\end{align*}
If $\phi \le \psi$, then by our assumption, we have \begin{align*}
\int_{\gele_j}\left(\int_{\gege_i}\diff \prob\left(\vect{Z}_i|\vect{Z}_j\right)\right)\diff \prob\left(\theta_j\right) \le 
\int_{\gele_j}\diff\prob\left(\vect{Z}_j\right) \le L\phi.
\end{align*}
On the other hand, if $\phi \ge \gamma$, let $\theta_j$ be the angle between $\vect{w}_*$ and $\vect{Z}_j$, we have \begin{align*}
\int_{\gele_j}\left(\int_{\gege_i}\diff \prob\left(\vect{Z}_i|\vect{Z}_j\right)\right)\diff \prob\left(\theta_j\right) \le 
& \int_{\frac{\pi}{2}}^{\frac{\pi}{2}+\gamma}\left(\int_{\gege_i}\diff \prob\left(\vect{Z}_i|\vect{Z}_j\right)\right) \diff \prob\left(\theta_j\right) \\
\le & L\gamma \\
\le & L\phi.
\end{align*} 
Therefore, $\sigma_{\max}\left(\expect\left[\vect{Z}_\gege\vect{Z}_\gele^\top\right]\right) \le L\phi$.
Using similar arguments we can show $\sigma_{\max}\left(\expect\left[\vect{Z}_\gege \vect{Z}_\lege\right]\right) \le L\phi$ and $\sigma_{\max}\left(\expect\left[\vect{Z}_\gele \vect{Z}_\lege\right]\right) \le L\phi$.
\end{proof}

\begin{proof}[Proof of Theorem~\ref{thm:random_init}]
	We use the same argument by~\citet{tian2017analytical}.
	Let $r_{init}$ be the initialization radius.
	The failure probability is lower bounded \[
	\frac{1}{2}\left(r_{init}\right) - \frac{\left(\frac{r_{init}^2}{2\norm{\vect{w}_*}_2}+\frac{\norm{\vect{w}_*}_2\cos\left(\phi_*\right)}{2}\right)\delta V_{k-1}\left(r_{init}\right)}{V_k\left(r_{init}\right)}.
	\]
	Therefore, $r_{init} = \cos\left(\phi_*\right)\norm{\vect{w}_*}_2$ maximizes this lower bound. 
	Plugging this optimizer in and using formula for the volume of the Euclidean ball, the failure probability is lower bounded by\[
	\frac{1}{2} - \cos\left(\phi_*\right)\frac{\pi\Gamma\left(p/2+1\right)}{\Gamma\left(p/2+1/2\right)} \ge \frac{1}{2}-\cos\left(\phi_*\right)\sqrt{\frac{\pi p}{2}}
	\] where we used Gautschi's inequality for the last step.
\end{proof}

\section{Additional Experimental Results}
\label{sec:add_exp}
Figure~\ref{fig:interpolation} show the loss of linear interpolation between the learned filter $w$ and ground truth filter $w_*$.
Our interpolation has the form $w_{inter} = \alpha w + (1-\alpha)w_*$ where $\alpha \in \left[0,1\right]$ is the interpolation ratio.
Note that for all interpolation ratios, the loss remains very low.
\begin{figure*}
	\centering
	\begin{subfigure}[t]{0.46\textwidth}
		\includegraphics[width=\textwidth]{./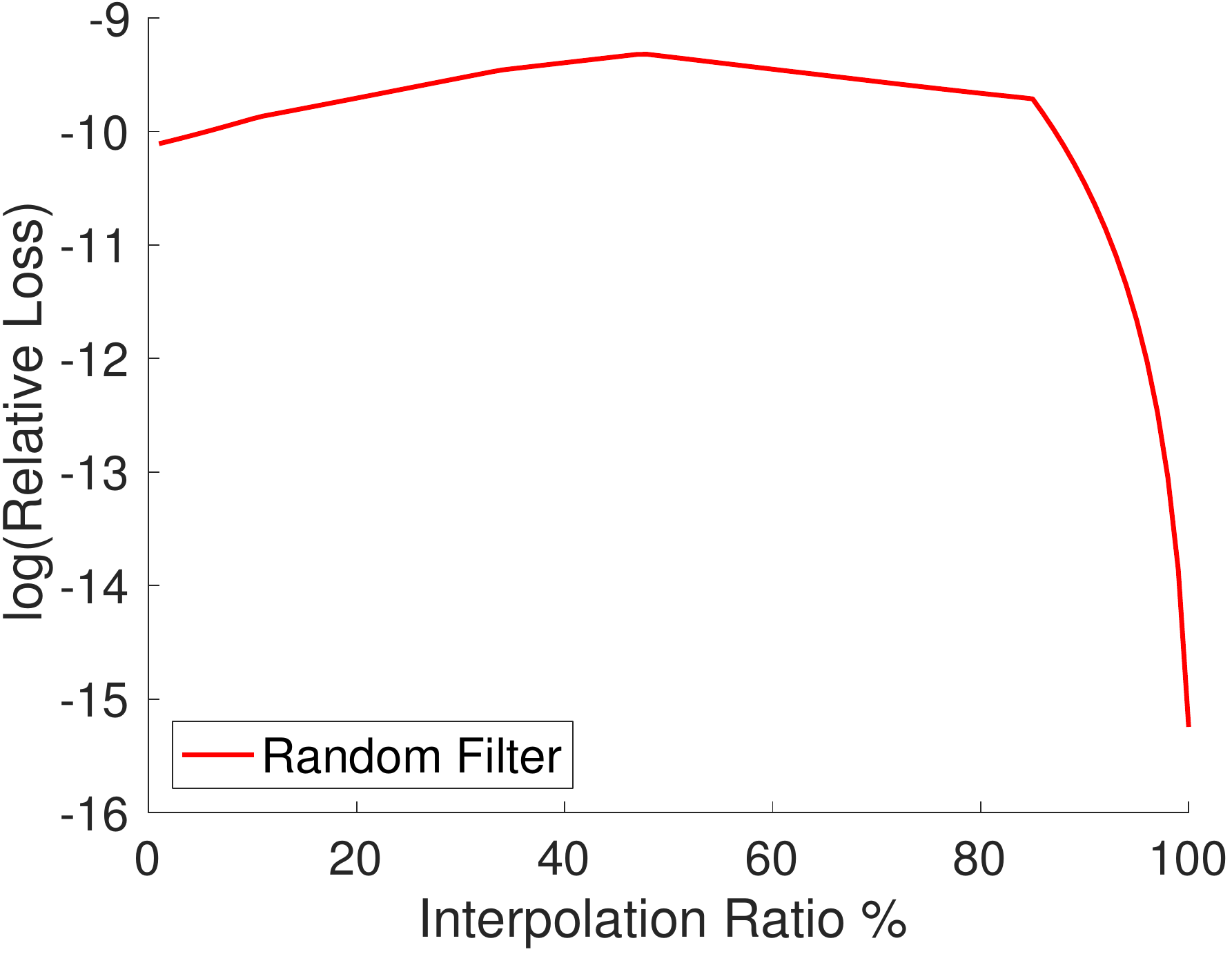}
	\end{subfigure}	
	\quad
	\begin{subfigure}[t]{0.46\textwidth}
		\includegraphics[width=\textwidth]{./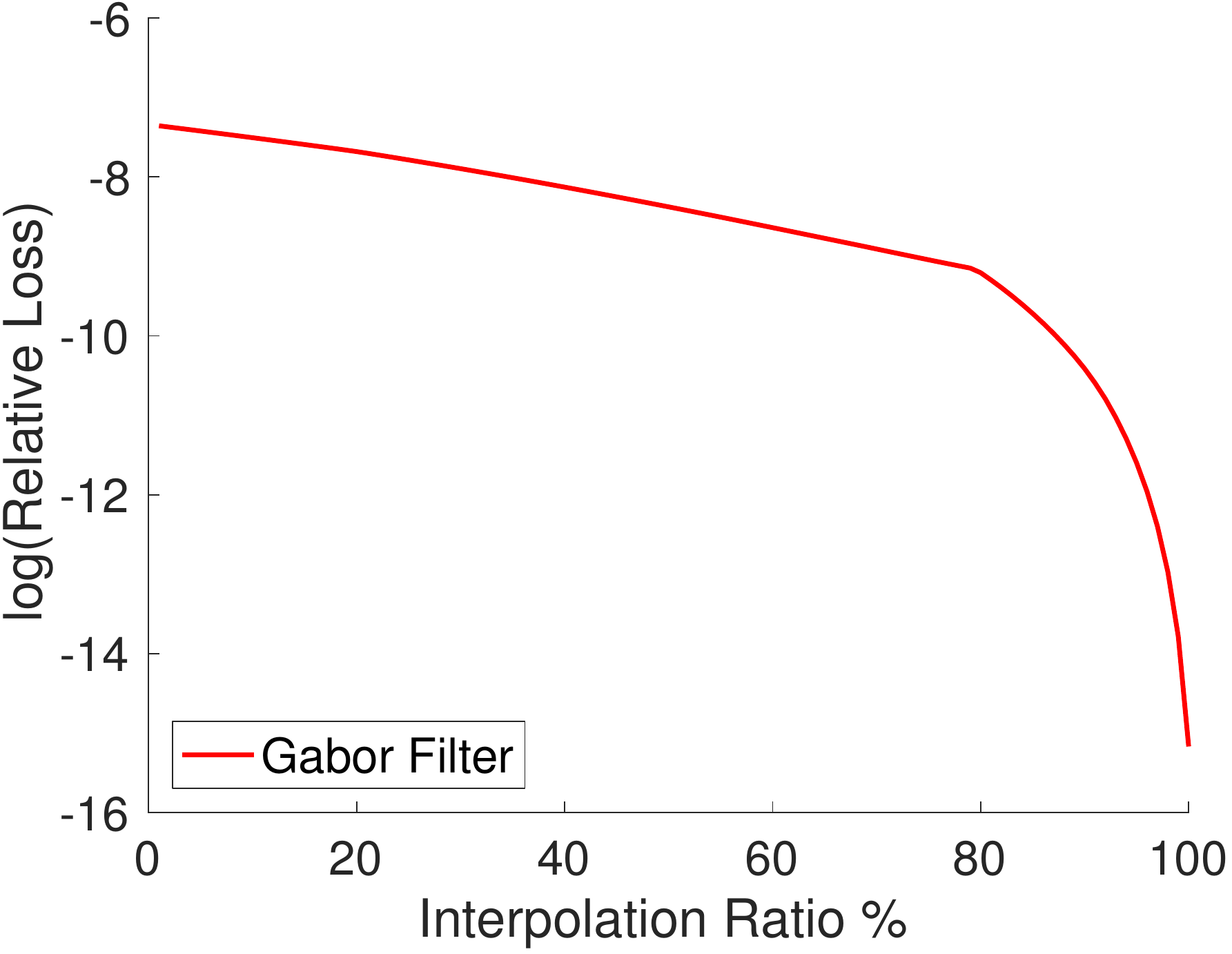}\label{fig:interpolation_gabor}
	\end{subfigure}	
	\caption{Loss of linear interpolation between learned filter and the true filter.
		}\label{fig:interpolation}
\end{figure*}

\end{document}